\setlist[itemize]{leftmargin=*}
\let\argmax\relax
\let\argmin\relax
\theoremstyle{plain}
\newtheorem{theorem}{Theorem}[section]
\newtheorem{proposition}{Proposition}[section]
\newtheorem{corollary}{Corollary}[section]
\newtheorem{lemma}{Lemma}[section]
\newtheorem{definition}{Definition}[section] % added by canzhe in 02.09.2022
\newtheorem{assumption}{Assumption}
\def\cA{{\mathcal{A}}}
\def\cM{{\mathcal{M}}}
\def\cO{{\mathcal{O}}}
\def\cS{{\mathcal{S}}}
\DeclareMathOperator*{\argmax}{arg\,max}
\DeclareMathOperator*{\argmin}{arg\,min}
\newcommand{\revise}[1]{\color[RGB]{0,0,0}#1}
\newcommand{\citet}[1]{\citeauthor{#1} (\citeyear{#1})}
\newcommand{\citep}[1]{\cite{#1}}
\title{DPMAC: Differentially Private Communication for Cooperative Multi-Agent Reinforcement Learning}
\author{
Canzhe Zhao$^{1,*}$
\and
Yanjie Ze$^{2,*}$\and
Jing Dong$^3$\and
Baoxiang Wang$^3$\And
Shuai Li$^{1,\dag}$
\affiliations
$^1$John Hopcroft Center for Computer Science, Shanghai Jiao Tong University\\
$^2$Department of Computer Science and Engineering, Shanghai Jiao Tong University\\
$^3$School of Data Science, The Chinese University of Hong Kong, Shenzhen
% $^4$Fourth Affiliation
\emails
\{canzhezhao, zeyanjie, shuaili8\}@sjtu.edu.cn,
jingdong@link.cuhk.edu.cn,
bxiangwang@cuhk.edu.cn
% $^*$\thanks{Equal contributions},
% $^\dagger$\thanks{Corresponding author}
}
\begin{document}

\maketitle

{
\renewcommand{\thefootnote}{\fnsymbol{footnote}}
\footnotetext[1]{The first two authors contributed equally to this work.}
\footnotetext[2]{Corresponding author.}
}

\begin{abstract}
Communication lays the foundation for cooperation in human society and in multi-agent reinforcement learning (MARL). Humans also desire to maintain their privacy when communicating with others, yet such privacy concern has not been considered in existing works in MARL. To this end, we propose the \textit{differentially private multi-agent communication} (DPMAC) algorithm, which protects the sensitive information of individual agents by equipping each agent with a local message sender with rigorous $(\epsilon, \delta)$-differential privacy (DP) guarantee. In contrast to directly perturbing the messages with predefined DP noise as commonly done in privacy-preserving scenarios, we adopt a stochastic message sender for each agent respectively and incorporate the DP requirement into the sender, which automatically adjusts the learned message distribution to alleviate the instability caused by DP noise. Further, we prove the existence of a Nash equilibrium in cooperative MARL with privacy-preserving communication, which suggests that this problem is game-theoretically learnable. Extensive experiments demonstrate a clear advantage of DPMAC over baseline methods in privacy-preserving scenarios.
\end{abstract}

\section{Introduction}
Multi-agent reinforcement learning (MARL) has shown remarkable achievements in many real-world
applications such as sensor networks \citep{zhang2011coordinated}, autonomous driving \citep{shwartz2016safe}, and traffic control \citep{wei2019colight}. 
To mitigate non-stationarity when training the multi-agent system, centralized training and decentralized execution (CTDE) paradigm is proposed. 
The CTDE paradigm yet faces the hardness to enable complex cooperation and coordination for agents during execution due to the inherent partial observability in multi-agent scenarios \citep{NDQ_2020}.
To make agents cooperate more efficiently in complex partial observable environments, communication between agents has been considered. 
Numerous works proposed differentiable communication methods between agents, 
which can be trained in an end-to-end manner, for more efficient cooperation among agents \citep{DBLP:conf/nips/FoersterAFW16,ATOC_2018,TarMAC_2019,I2C_2020,intention_sharing_2021,NDQ_2020}. 
% The communication can be either broadcast \citep{TarMAC_2019,ATOC_2018,NDQ_2020}, where the connection between agents can be modeled as a complete graph,  or one-to-one as a general graph \citep{I2C_2020}.

However, the advantages of communication, resulting from full information sharing, come with the possible privacy leakage of individual agents for both broadcasted and one-to-one messages.
{\revise Therefore, in practice, one agent may be unwilling to fully share its private information with other agents even though in \textit{cooperative} scenarios. For instance, if we  train and deploy an MARL-based autonomous driving system, each autonomous vehicle involved in this system could be regarded as an agent and all vehicles work together to improve the safety and efficiency of the system. Hence, this can be regarded as a cooperative MARL scenario \citep{shwartz2016safe,YangNIFZ20}. However, owners of autonomous vehicles may not 
allow their vehicles 
to send private information to other vehicles without any desensitization since this may divulge their private information such as their personal life routines \citep{HassanRC20}. 
}
Hence, a natural question arises:

\textit{Can the MARL algorithm with communication under the CTDE framework be endowed with both the rigorous privacy guarantee and the empirical efficiency?}

To answer this question, we start with a simple motivating example called \textit{single round binary sums},
where several players attempt to guess the bits possessed by others and they can share their own information by communication. 
In Section \ref{section: motivation example}, we show that a local message sender using the randomized response mechanism allows an privacy-aware receiver to correctly calculate the binary sum in a privacy-preserving way. From the example, we gain two insights: 1) The information is not supposed to be aggregated likewise in previous communication methods in MARL \citep{TarMAC_2019,I2C_2020}, as a trusted data curator is not available in general. On the contrary, privacy is supposed to be achieved locally for every agent; 2) Once the agents know a priori, that certain privacy constraint exists, they could adjust their inference on the noised messages. These two insights indicate the principles of our privacy-preserving communication scheme that we desire \textit{a privacy-preserving local sender} and \textit{a privacy-aware receiver}.

Our algorithm, \textit{differentially private multi-agent communication} (DPMAC), instantiates the described principles. More specifically, for the sender part, each agent is equipped with a \textit{local} sender which ensures differential privacy (DP) \citep{DP_2006} by performing an additive Gaussian noise. 
{\revise The message sender in DPMAC is local in the sense that each agent is equipped with its own message sender, which is only used to send its own messages. Equipped with this local sender, DPMAC is able to not only protect the privacy of communications between agents but also satisfy different privacy levels required from different agents.}
In addition, {\revise the sender} adopts the Gaussian distribution to represent the message space and sample stochastic messages from the learned distribution.
However, it is known that the DP noise may impede the original learning process \citep{dwork2014algorithmic,alvim2011differential}, resulting in unstable or even divergent algorithms, 
especially for deep learning-based methods \citep{abadi2016deep,GS-WGAN_2020}. 
To cope with this issue, we incorporate the noise variance into the representation of the message distribution, so {\revise that} the agents could learn to adjust the message distribution automatically according to varying noise {\revise scales}. For the receiver part, due to the gradient chain between the sender and the receiver, our receiver naturally utilizes the privacy-relevant information hidden in the gradients. This implements the privacy-aware receiver described in the motivating example.

When protecting privacy in communication is required in a cooperative game,
the game is \textit{not} purely cooperative anymore since each player involved will face a trade-off between the team utility and its personal privacy. 
To analyze the convergence of cooperative {\revise games} with privacy-preserving communication, 
we first define a single-step game, namely {\revise the} \textit{collaborative game with privacy} (CGP).
We prove that under some mild assumptions of the players' value functions, 
CGP could be transformed into a potential game \citep{monderer1996potential}, subsequently leading to the existence of a Nash equilibrium (NE). 
With this property, NE could also be proven to exist in
the single round binary sums. 
Furthermore, 
we extend the single round binary sums into a multi-step game called \textit{multiple round sums} using the notion of Markov potential game (MPG) \citep{leonardos2021global}. 
{\revise Inspired by} \citet{DBLP:conf/iclr/MacuaZZ18}
and modeling the privacy-preserving communication as part of the agent action, we prove the existence of NE, which indicates that the multi-step game with privacy-preserving communication could be learnable.

To validate the effectiveness of DPMAC, extensive experiments are conducted in multi-agent particle environment (MPE) \citep{MADDPG_2017}, including cooperative navigation, cooperative communication and navigation, and predator-prey tasks. 
Specifically, in privacy-preserving scenarios, DPMAC significantly outperforms baselines. Moreover, even without any privacy {\revise constraints}, DPMAC could also gain competitive performance against baselines.

To sum up, the contributions of this work are threefold:
\begin{itemize}
    \item To the best of our knowledge, we make the first attempt to develop a framework for private communication in MARL, named DPMAC, 
    % with {\revise the} theoretical guarantee of $(\epsilon,\delta)$-DP.
    with provable $(\epsilon,\delta)$-DP guarantee.
    \item  
    We prove the existence of the Nash equilibrium for cooperative games with privacy-preserving communication, showing that these games are game-theoretically learnable.
    \item Extensive experiments 
    % on various MPE tasks 
    show that DPMAC clearly outperforms baselines in privacy-preserving scenarios and gains competitive performance in non-private scenarios.
\end{itemize}

\section{Related Work}
\paragraph{Learning to communicate in MARL}
Learning communication protocols in MARL by backpropagation and end-to-end training has achieved great advances in
recent years \citep{SukhbaatarSF16,DBLP:conf/nips/FoersterAFW16,ATOC_2018,TarMAC_2019,NDQ_2020,I2C_2020,intention_sharing_2021,RangwalaW20,VBC_2019,SinghJS19,zhang2020succinct,zhang2021taming,lin2021learningto,peng2017multiagent}.  
Amongst these works, \citet{SukhbaatarSF16} propose CommNet as the first differentiable communication framework for MARL. 
Further, TarMAC \citep{TarMAC_2019} and ATOC \citep{ATOC_2018} utilize the attention mechanism to extract useful information as messages. 
I2C \citep{I2C_2020} makes the first attempt to enable agents to learn one-to-one communication via causal inference. 
\citet{NDQ_2020} propose NDQ, which learns nearly decomposable value functions to reduce the communication overhead.
\citet{intention_sharing_2021} consider sharing an imagined trajectory as {\revise an} intention for effectiveness. 
{\revise Besides, 
to communicate in the scenarios with limited bandwidth, some works consider learning to send compact and informative messages in MARL via minimizing the entropy of messages between agents using information bottleneck methods \citep{WangHYQ0R20,abs-2207-00088,abs-2112-10374,LiYLL21}.}
While learning effective communication in MARL has been extensively investigated, existing communication algorithms potentially leave the privacy of each agent vulnerable to information attacks.

\paragraph{Privacy preserving in RL}
With wide attention on reinforcement learning (RL) algorithms and applications in recent years, so have concerns about their privacy.
\citet{sakuma2008privacy} consider privacy in the distributed RL problem and utilize cryptographic tools to protect the private state-action-state triples. Algorithmically, \citet{balle2016differentially} make the first attempt to establish a policy evaluation algorithm with DP guarantee, where the Monte-Carlo estimates are perturbed with Gaussian noises. 
\citet{wang2019privacy} generalize the results to Q-learning, where functional noises are added to protect the reward functions. 
% Theoretically, \citet{garcelon2021local} analyze the regret bound of finite-horizon MDPs in the tabular case. 
Theoretically, \citet{garcelon2021local} study regret minimization of finite-horizon  Markov decision processes (MDPs) with DP guarantee in the tabular case. 
In a large or continuous state space where function approximation is required,
\citet{liao2021locally} and \citet{zhou2022differentiallyrl} subsequently take the first step to establish the sublinear regret in linear mixture MDPs.
% \citet{zhao2022differentially} propose the differentially private version of the temporal difference learning with nonlinear function approximation. 
Meanwhile, a large number of works focus on preserving privacy in multi-armed bandits \citep{tao2022optimal,tenenbaumKMS2021differentially,dubey21no,zheng2022locally,dubeyP2020differentially,tossouD17achieving}.

Privacy is also studied in recent literature on MARL and multi-agent system. \citet{ye2020differential_advising} study differential advising for value-based agents, which share action values as the advice, largely differing in both the communication framework and the CTDE framework. \citet{dong2020distributed}  propose an average consensus algorithm with {\revise a} DP guarantee in the multi-agent system.

\section{Preliminaries}
We consider a fully cooperative MARL problem where $N$ agents work collaboratively to maximize the joint rewards. The underlying environment can be captured by a decentralized partially observable Markov decision process (Dec-POMDP), denoted by the tuple $\langle\mathcal{S},  \mathcal{A}, \mathcal{O}, \mathcal{P}, \mathcal{R}, \gamma \rangle$.
Specifically, $\cS$ is the global state space, $\cA=\prod^N_{i=1}\cA_i$ is the joint action space, $\cO=\prod^N_{i=1}\cO_i$ is the joint observation space, $\mathcal{P}\left(s^{\prime} \mid s, \boldsymbol{a}\right):=\mathcal{S} \times \mathcal{A} \times \mathcal{S} \rightarrow[0,1]$ determines the state transition dynamics, $\mathcal{R}(s, \boldsymbol{a}): \mathcal{S} \times \mathcal{A} \rightarrow \mathbb{R}$
is the reward function, and $\gamma\in[0,1)$ is the discount factor.
Given a joint policy $\boldsymbol{\pi}=\{\pi_{i}\}^N_{i=1}$, the joint action-value function at time $t$ is  $Q^{\boldsymbol{\pi}}\left(s^{t}, \boldsymbol{a}^{t}\right)=\mathbb{E}\left[G^{t} \mid s^{t},\boldsymbol{a}^{t},\boldsymbol{\pi} \right]$, where $G^{t}=\sum_{i=0}^{\infty} \gamma^{i} \mathcal{R}^{t+i}$ is the cumulative reward, and $\boldsymbol{a}^t=\{a_i^t\}^N_{i=1}$ is the joint action. 
The ultimate goal of the agents is to find an optimal policy $\boldsymbol{\pi}^\ast$ which maximizes $Q^{\boldsymbol{\pi}}\left(s^{t},\boldsymbol{a}^{t}\right)$.

Under the aforementioned cooperative setting, we study the case where agents are allowed to communicate with a joint message space $\cM=\prod^N_{i=1}\cM_i$. 
When the communication is unrestricted, the problem is reduced to a single-agent RL problem, which effectively solves the challenge posed by partially observable states, but puts the individual agent's privacy at risk. 
To overcome the challenges of privacy and partial observable states simultaneously, we investigate algorithms that maximize the cumulative rewards while satisfying DP, given in the following definition.

\begin{definition}[$(\epsilon,\delta)$-DP, \cite{DP_2006}]
 \label{def: epsilon, delta-DP}
 A randomized mechanism $f: \mathcal{D} \to$ $\mathcal{Y}$ satisfies $(\epsilon, \delta)$-differential privacy if for any neighbouring datasets $D, D^{\prime} \in$ $\mathcal{D}$ and $S \subset \mathcal{Y}$, it holds that
$
\operatorname{Pr}[f(D) \in S] \leq e^{\epsilon} \operatorname{Pr}\left[f\left(D^{\prime}\right) \in S\right]+\delta
$.
\end{definition}
DP offers a mathematically rigorous way to quantify the privacy of an algorithm \citep{DP_2006}. An algorithm is said to be ``privatized'' under the notion of DP if it is statistically hard to infer the presence of an individual data point in the dataset by observing the output of the algorithm. {\revise More intuitively, an algorithm satisfies DP if it provides nearly the same outputs given the neighbouring input datasets (\textit{i.e.}, $\operatorname{Pr}[f(D) \in S]\approx\operatorname{Pr}\left[f\left(D^{\prime}\right) \in S\right]$), which hence protects the sensitive information from the curious attacker.}

With DP, each agent $i$ is assigned with a privacy budget $\epsilon_i$, which is negatively correlated to the level of privacy protection. Then we have $\boldsymbol{\epsilon}=\{\epsilon_i\}^N_{i=1}$ as the set of all privacy budgets. In addition to maximizing the joint rewards as usually required in cooperative MARL, the messages sent from agent $i$ are also required to satisfy the privacy budget $\epsilon_i$ with {\revise probability at least $1-\delta$}.

\section{Motivating Example}
\label{section: motivation example}
Before introducing our communication framework, we first investigate a motivating example, which {\revise is a \textit{cooperative} game and} inspires the design principles of private communication mechanisms in MARL. The motivating example is a simple yet interesting game, called \textit{single round binary sums}. 
The game is extended from the example provided in \cite{cheu2021differential} for analyzing the shuffle model, while we illustrate the game from the perspective of multi-agent systems. 
We note that though this game is one-step, which is different from the sequential decision process like MDP, it is {\revise illustrative} enough to show how the communication protocol works as a tool to achieve a better trade-off between privacy and utility.

Assume that there are $N$ agents involved in this game.
Each agent $i \in [N]$ has a bit $b_i\in\{0,1\}$ and can tell other agents the information about its bit by communication. The objective {\revise of} the game is for every agent to guess $\sum_i b_i$, the sum of the bits of all agents. Namely, each agent $i$ makes a guess $g_i$ and the utility of the agent is to maximize $r_i=-|\sum_{j}b_j - \mathbb{E}[g_i]|$. The (global) reward of this game is the sum of the utility over all agents, \textit{i.e.}, $\sum_i r_i$.

Without loss of generality, we write the guess $g_i$ into $g_i = \sum_{j\neq i}y_{ij}+ b_i$, where $y_{ij}$ is the guessed bit of agent $j$ by agent $i$.
If all agents share their bits without covering up, the guessed bit $y_{ij}$ will obviously be equal to $b_j$ and all agents attain an optimal return. {\revise Hence this game is fully cooperative under no privacy constraints}. 
% This strategy is however under the assumption that \textit{everyone is altruistic to share their own bit}. 
{\revise However, the optimal strategy is  under the assumption that \textit{everyone is altruistic to share their own bits}. }

To preserve the privacy in communication, the message (\textit{i.e.}, the sent bit) could be randomized using \textit{randomized response}, which perturbs the bit $b_i$ with probability $p$, as shown below:
\begin{align*}
    x_i=\mathcal{R}_{\mathrm{RR}} \left(b_{i}\right):= \begin{cases}\operatorname{Ber}(1 / 2) & \text { with probability } p \\ b_{i} & \text { otherwise }\,,\end{cases}
\end{align*}
where $x_i$ is the random message and $\operatorname{Ber}(\cdot)$ indicates the Bernoulli distribution. Under our context, $\mathcal{R}_{\mathrm{RR}}$ is a \textit{privacy-preserving message sender}, 
% whose privacy guarantee is shown in Proposition \ref{theorem: randomized response privacy}. 
whose privacy guarantee is guaranteed by the following proposition.

\begin{proposition}[\cite{DBLP:conf/crypto/BeimelNO08}]
\label{theorem: randomized response privacy}
Setting $p= \frac{2}{e^\epsilon + 1}$ in $\mathcal{R}_{RR}$ suffices for $(\epsilon,0)$-differential privacy. 
\end{proposition}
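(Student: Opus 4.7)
The plan is to verify the $(\epsilon, 0)$-DP condition directly from Definition~\ref{def: epsilon, delta-DP} by enumerating the possible outputs of $\mathcal{R}_{\mathrm{RR}}$. Since the input is a single bit $b_i \in \{0,1\}$, neighboring ``datasets'' correspond to the two possible input values, and the output space is the finite set $\{0,1\}$, so DP reduces to bounding four output probabilities.

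First I would compute the conditional output distribution of $x_i = \mathcal{R}_{\mathrm{RR}}(b_i)$ under each input. Breaking into the two branches of the mechanism, for $b_i = 0$ we get $\Pr[x_i = 0 \mid b_i = 0] = p \cdot \tfrac{1}{2} + (1-p) = 1 - \tfrac{p}{2}$ and $\Pr[x_i = 1 \mid b_i = 0] = \tfrac{p}{2}$; symmetrically for $b_i = 1$ we get $\Pr[x_i = 1 \mid b_i = 1] = 1 - \tfrac{p}{2}$ and $\Pr[x_i = 0 \mid b_i = 1] = \tfrac{p}{2}$.

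Next I would form the privacy loss ratios. By symmetry, the worst case over $S \subseteq \{0,1\}$ and over the two neighboring inputs is
\[
\max_{s\in\{0,1\}} \frac{\Pr[x_i = s \mid b_i = 0]}{\Pr[x_i = s \mid b_i = 1]} \;=\; \frac{1 - p/2}{p/2} \;=\; \frac{2 - p}{p}.
\]
Setting $\frac{2-p}{p} \le e^\epsilon$ and solving for $p$ yields the threshold $p \ge \frac{2}{e^\epsilon + 1}$; the choice $p = \frac{2}{e^\epsilon + 1}$ achieves equality. Plugging this value back and checking every singleton (and hence every subset) of $\{0,1\}$ confirms $\Pr[\mathcal{R}_{\mathrm{RR}}(0) \in S] \le e^\epsilon \Pr[\mathcal{R}_{\mathrm{RR}}(1) \in S]$, with $\delta = 0$.

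There is essentially no technical obstacle here: the proof is a two-line computation once the output distribution is written down. The only thing worth being careful about is to verify the DP inequality for \emph{every} measurable subset $S \subseteq \{0,1\}$ rather than just singletons — but since the output space has only four subsets ($\emptyset, \{0\}, \{1\}, \{0,1\}$) and the trivial ones are automatic, it suffices to check the two singletons, which the computation above already covers.
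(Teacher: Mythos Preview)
Your proof is correct: the direct computation of the four output probabilities and the resulting likelihood ratio $\frac{2-p}{p}$ is exactly the standard argument for randomized response, and setting $p=\frac{2}{e^\epsilon+1}$ makes this ratio equal to $e^\epsilon$. The paper does not supply its own proof of this proposition---it simply cites the result from \cite{DBLP:conf/crypto/BeimelNO08}---so there is nothing further to compare.
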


% 20230517
When each agent is equipped with such a privacy-preserving sender $\mathcal{R}_{RR}$ while adhering to the originally optimal strategy (\textit{i.e.}, believing what others tell and doing the guess), all agents would make an inaccurate guess. The bias of the guess denoted as $\text{err}_i$ caused by $\mathcal{R}_{RR}$ is then
\begin{align*}
\text{err}_i&=\mathbb{E}[g_i] - \sum_i b_i=   \sum_{j\neq i}\mathbb{E}[x_{j}-b_j] =p\sum_{j\neq i}(\frac{1}{2}-b_j)\\
&=\frac{p(N-1)}{2}-p\sum_{j\neq i}b_j\,.
\end{align*}
Without any prior knowledge, the bias could not be reduced for $(\epsilon,0)$-DP algorithms. 
However, if the probability $p$ of perturbation is set as prior common knowledge for all agents before the game starts, things will be different. One could transform the biased guess into
\begin{align*}
g_i^\mathcal{A}=\mathcal{A}_{\mathrm{RR}}(\vec{x}_{-i}) &:=\frac{1}{1-p}\left(\sum_{j\neq i} x_{j}-(N-1) p / 2\right)\,,
\end{align*}
where $\vec{x}_{-i}=[x_1,\ldots,x_{i-1},x_{i+1},\ldots, x_N]^\top$ denote the messages received by agent $i$. Then the estimate will be unbiased as
\begin{align*}
\mathbb{E}\left[g_i^\mathcal{A}\right] &=\frac{1}{1-p}\left(\mathbb{E}\left[\sum_{j\neq i} x_{j} \right]-\frac{p(N-1) }{2}\right) + b_i = \sum_{i}b_i\,.
\end{align*}

This example inspires that a communication algorithm could be both privacy-preserving and efficient. 
From the perspective of privacy, by the post-processing lemma of DP, any post-processing does not affect the original privacy level.
From the perspective of utility, we could eliminate the bias $\text{err}_i$ if the agent is equipped with the receiver $\mathcal{A}_{RR}$ and the prior knowledge $p$ is given.

In general, our motivating example gives two principles for designing privacy-preserving communication frameworks.
First, to prevent sensitive information from being inferred by other curious agents,
we equip each agent with a local message sender with certain privacy constraints. 
Second, given prior knowledge about the privacy requirement of other agents, the receiver could strategically analyze the received noisy messages to statistically reduce errors due to the noisy communication. These two design principles correspond to two parts of our DPMAC framework respectively, \textit{i.e.}, a \textit{privacy-preserving local sender}, and a \textit{privacy-aware receiver}.

\section{Methodology}
Based on our design principles, we now introduce our DPMAC framework, as shown in Figure  \ref{fig:model}. Our framework is general and flexible, which makes it compatible {\revise with} any CTDE method.

\subsection{Privacy-preserving Local Sender with Stochastic Gaussian Messages}\label{sec:method_sender}

In this section, we present the sender's perspective on the privacy guarantee.
At time $t$, for agent $i$, a message function $f_i^s$ is used to generate a message for communication. $f_i^s$ takes a subset of transitions in local trajectory $\tau^t_{i}$ as input, where the subset is sampled uniformly without replacement from $\tau^t_{i}$ (denote the sampling rate as $\gamma_1$). 
This message is perturbed by the Gaussian mechanism with variance $\sigma_i^2$ \citep{DP_2006}.
Agent $i$ then samples a subset of other agents to share this message (denote the sampling rate as $\gamma_2$). The following theorem guarantees the DP of the sender.

\begin{theorem}[Privacy guarantee for DPMAC]
\label{theorem: privacy for dpmac}
% Let $\gamma_1,\gamma_2\in (0,1)$, and $C$ be the sensitivity of the message functions.
Let $\gamma_1,\gamma_2\in (0,1)$, and $C$ be the {\revise $\ell_2$ norm of} the message functions.
For any $\delta>0$ and privacy budget $\epsilon_i$, the communication of agent $i$ satisfies $(\epsilon_i,\delta)$-DP when 
$\sigma^2_i=\frac{14\gamma_2\gamma_1^2NC^2\alpha}{\beta\epsilon_i}$,
if we have 
$\alpha = \frac{\log \delta^{-1}}{\epsilon_i(1-\beta)}+1 \leq 2 \sigma^{\prime 2} \log \left(1 / \gamma_1 \alpha\left(1+\sigma^{\prime 2}\right)\right) / 3+1$ with $\beta\in (0,1)$ and $\sigma^{\prime 2}=\sigma^2_i/(4C^2)\geq 0.7$
.
\end{theorem}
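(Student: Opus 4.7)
The plan is to carry out the standard moments-accountant argument of Abadi et al. (2016), adapted to the two-level subsampling structure of the DPMAC sender. The three ingredients are (i) the Rényi differential privacy (RDP) of the Gaussian mechanism, (ii) privacy amplification by subsampling, applied both to the $\gamma_1$-subsample of transitions fed into $f_i^s$ and to the $\gamma_2$-subsample of receivers, and (iii) the conversion from RDP back to $(\epsilon_i,\delta)$-DP.

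First, I would write the sender of agent $i$ as a composition of independent queries. At each communication step, a subsample of the local trajectory $\tau_i^t$ is drawn uniformly at rate $\gamma_1$, the clipped function $f_i^s$ of $\ell_2$ sensitivity $C$ is evaluated, and Gaussian noise of variance $\sigma_i^2$ is added; the resulting message is then delivered to a $\gamma_2$-subsample of the at most $N$ receivers. Treating the broadcast as $\gamma_2 N$ effectively independent releases, the per-release mechanism is a $\gamma_1$-subsampled Gaussian with noise-to-sensitivity ratio $\sigma'^2 = \sigma_i^2/(4C^2)$.

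Next I would bound the log-moment of the privacy loss. The Gaussian mechanism is $(\alpha,\alpha C^2/(2\sigma_i^2))$-RDP, and by the subsampling amplification lemma (Lemma 3 of Abadi et al.), the $\gamma_1$-subsampled version gives a log-moment $\alpha_{\mathcal M}(\lambda)\le \lambda(\lambda+1)\gamma_1^2 C^2/\sigma_i^2 + O(\gamma_1^3\lambda^3 C^3/\sigma_i^3)$, valid under precisely the range condition $\alpha\le \tfrac{2}{3}\sigma'^2\log(1/(\gamma_1\alpha(1+\sigma'^2)))+1$ and $\sigma'^2\ge 0.7$ stated in the theorem. Composing this log-moment over the $\gamma_2 N$ releases multiplies it by $\gamma_2 N$, so the total mechanism is $(\alpha,\, c\,\gamma_2\gamma_1^2 N C^2\alpha/\sigma_i^2)$-RDP for an absolute constant $c$ absorbing the $1/2$ from Gaussian RDP and the constant from the amplification lemma.

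Finally I would convert to $(\epsilon_i,\delta)$-DP via the standard tail bound $(\alpha,R)\text{-RDP}\Rightarrow (R+\log(1/\delta)/(\alpha-1),\delta)\text{-DP}$. Splitting the budget with $\beta\in(0,1)$, I set $\log(1/\delta)/(\alpha-1)=\epsilon_i(1-\beta)$, which forces the choice $\alpha=\log\delta^{-1}/(\epsilon_i(1-\beta))+1$ used in the theorem, and require the RDP part to equal $\beta\epsilon_i$, which solves to $\sigma_i^2 = c\,\gamma_2\gamma_1^2 N C^2\alpha/(\beta\epsilon_i)$; tracking constants yields $c=14$. The main obstacle is the amplification step: getting the constant 14 right and justifying the validity condition on $\alpha$ requires carefully bounding the higher-order terms in the log-moment expansion and discarding them under the hypothesis $\alpha\le \tfrac{2}{3}\sigma'^2\log(1/(\gamma_1\alpha(1+\sigma'^2)))+1$, which is exactly the regime in which Abadi et al.'s moments accountant gives the clean quadratic-in-$\gamma_1$ bound. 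Everything else (Gaussian RDP, adaptive composition, and RDP-to-DP conversion) is a direct application of known lemmas.
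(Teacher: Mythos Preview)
Your proposal is correct and follows essentially the same three-step argument as the paper: bound the sensitivity by $2C$ via clipping, apply the subsampled-Gaussian RDP bound (the paper cites Lemma~3.7 of Wang et al.\ 2019 rather than Abadi et al., which gives the $3.5\gamma_1^2\Delta^2\alpha/\sigma^2$ rate directly and yields $14$ once $\Delta=2C$ is plugged in), compose over the $\gamma_2 N$ receivers via RDP composition, and convert to $(\epsilon_i,\delta)$-DP using the $\beta$/$(1-\beta)$ budget split. The only cosmetic difference is that the paper works in RDP throughout rather than the log-moment language, but the computations are identical.
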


\begin{figure*}[t]
    \centering
    \vspace{-0.3in}
    \includegraphics[width=0.9\textwidth]{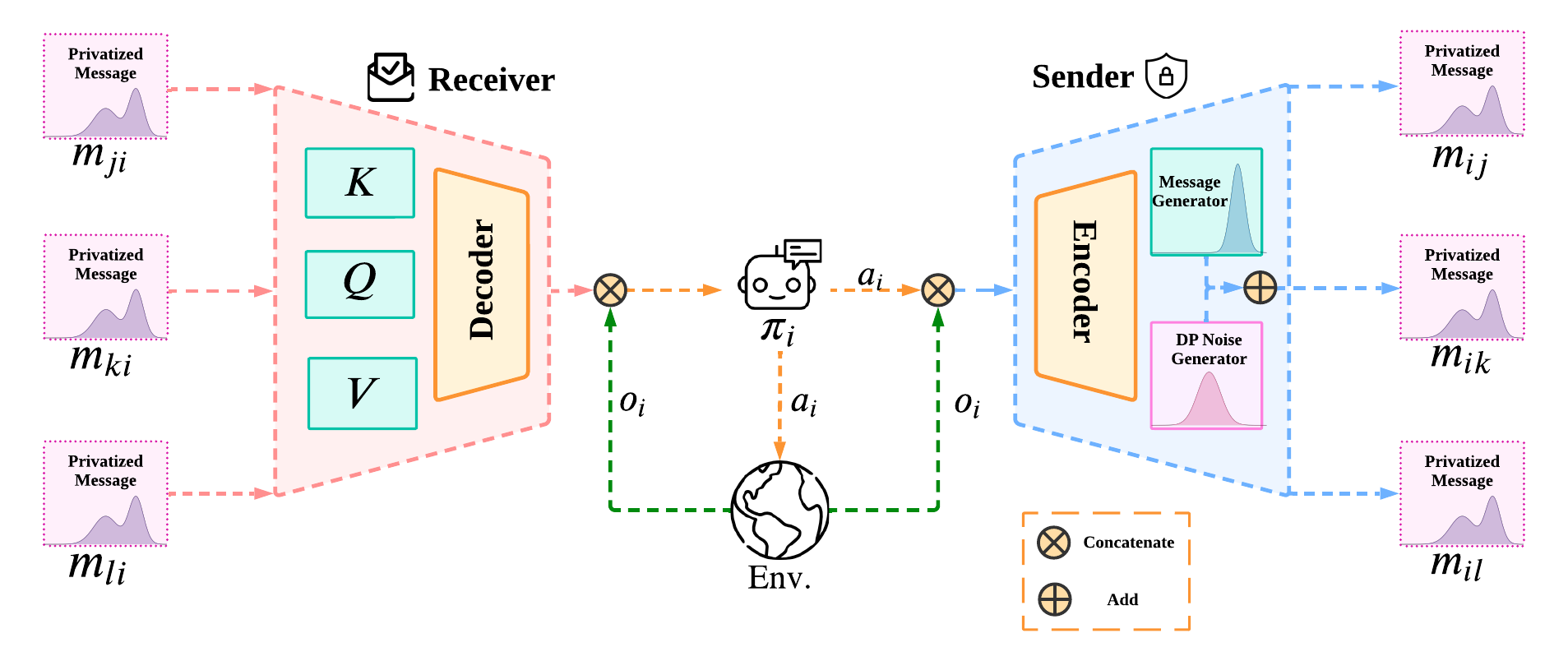}
    \caption{The overall structure of DPMAC.  The message receiver of agent $i$ integrates other agents' messages $\{m_{ji},m_{ki},m_{li}\}$ with the self-attention mechanism and the integrated message is fed into the policy $\pi_i$ together with the observation $o_i$.  Agent $i$ interacts with the environment by  taking action $a_i$. Then $o_i$ and $a_i$ are concatenated and encoded by a privacy-preserving message sender and sent to other agents. }
    \label{fig:model}
\end{figure*}

With Theorem \ref{theorem: privacy for dpmac}, one can directly translate a non-private MARL with a communication algorithm into a private one. However, as we shall see in our experiment section, directly injecting the privacy noise into existing MARL with communication algorithms may lead to serious performance degradation. 
In fact, the injected noise might jeopardize the useful information incorporated in the messages, or even leads to meaningless messages. 
% \zhao{tbf: further justify.}
To alleviate the negative impacts of the injected privacy noise on the cooperation between agents, we adopt a stochastic message sender in the sense that the messages sent by our sender are sampled from a learned message distribution.
This makes DPMAC different from existing works in MARL that communicate through deterministic messages \citep{SukhbaatarSF16,DBLP:conf/nips/FoersterAFW16,ATOC_2018,TarMAC_2019,I2C_2020,intention_sharing_2021}.

In the following, we drop the dependency 
of parameters 
on $t$ when it is clear from the context.
Without loss of generality, let the message distribution be multivariate Gaussian and let $p_i$ be the message sampled from the message distribution $\mathcal{N}(\mu_i, \Sigma_i)$, where $\mu_i = f_i^{\mu}(o_i,a_i;{\theta}_i^\mu)$ and  $\Sigma_i = f_i^{\sigma}(o_i,a_i;{\theta}_i^\sigma)$ are the mean vector and covariance matrix learned by the sender, and ${\theta}_i^\mu$ and ${\theta}_i^\sigma$ are the parameters of the sender's neural networks. Then ${\theta}_i^{\mu}$ and ${\theta}_i^{\sigma}$ will be optimized towards making all the agents to send more effective messages to encourage better team cooperation and gain higher team rewards.
For notational convenience, let ${\theta}_i^s=[{\theta}_i^{\mu\top},{\theta}_i^{\sigma\top}]^\top$.
Then the sent privatized message $m_{i}=p_i+u_i$ where $u_i\sim \mathcal{N}(0, \sigma^2_i\mathbf{I}_d)$ is the additive privacy noise. It is clear that $m_{i}\sim\mathcal{N}(\mu_i, \Sigma_i+\sigma^2_i\mathbf{I}_d)$ since $p_i$ is independent from $u_i$.
Counterfactually, 
% let $m^\prime_{i}\sim\mathcal{N}(\mu^\prime_i, \Sigma^\prime_i)$, where $\mu^\prime_i = f_i^{\mu}(o_i,a_i;{\theta}_i^{\mu\prime})$ and  $\Sigma^\prime_i = f_i^{\sigma}(o_i,a_i;{\theta}_i^{\sigma\prime})$ is the sent message when it was not under any privacy constraint.
let $m^\prime_{i}\sim\mathcal{N}(\mu^\prime_i, \Sigma^\prime_i)$ be the sent message when it was not under any privacy constraint, where $\mu^\prime_i = f_i^{\mu}(o_i,a_i;{\theta}_i^{\mu\prime})$ and  $\Sigma^\prime_i = f_i^{\sigma}(o_i,a_i;{\theta}_i^{\sigma\prime})$.

Let the optimal message distribution be $\mathcal{N}(\mu^\ast_i,\Sigma^\ast_i)$. We are interested to characterize ${\theta}_i^{s^\prime}$ and ${\theta}_i^{s}$. By the optimality of $\mu^\ast_i,\Sigma^\ast_i$,
\begin{align}\label{eq:eq_wo_dp}
    {\theta}_i^{s^\prime}&=\argmin_{\theta}{D}_{\operatorname{KL}}(\mathcal{N}(\mu^\prime_i, \Sigma^\prime_i) \| \mathcal{N}(\mu^\ast_i,\Sigma^\ast_i))\notag\\
    &=\argmin_{\theta}\log \frac{|\Sigma^\ast_i|}{|\Sigma^\prime_i|} +\operatorname{tr}\{\Sigma^{\ast-1}_i\Sigma^\prime_i\}+\|\mu^\prime_i-\mu^\ast_i\|^2_{\Sigma^{\ast-1}_i}\,.
\end{align}
Then under the privacy constraints, the stochastic sender will learn ${\theta}_i^s$ such that 
\begin{align}\label{eq:eq_w_dp}
    {\theta}_i^s=\argmin_{\theta}&{D}_{\operatorname{KL}}(\mathcal{N}(\mu_i, \Sigma_i+\sigma^2_i\mathbf{I}_d) \| \mathcal{N}(\mu^\ast_i,\Sigma^\ast_i))\notag\\
    =\argmin_{\theta}&\log \frac{|\Sigma^\ast_i|}{|\Sigma_i+\sigma^2_i\mathbf{I}_d|} +\operatorname{tr}\{\Sigma^{\ast-1}_i(\Sigma_i+\sigma^2_i\mathbf{I}_d)\}\notag\\
    &+\|\mu_i-\mu^\ast_i\|^2_{\Sigma^{\ast-1}_i}\,.
\end{align}
Through {\revise Equation} (\ref{eq:eq_w_dp}),
it is possible to directly incorporate the distribution of privacy noise into the optimization process of the sender to help to learn ${\theta}_i^s$ such that ${D}_{\operatorname{KL}}(\mathcal{N}(\mu_i, \Sigma_i+\sigma^2_i\mathbf{I}_d) \| \mathcal{N}(\mu^\ast_i,\Sigma^\ast_i))\leq {D}_{\operatorname{KL}}(\mathcal{N}(\mu^\prime_i, \Sigma^\prime_i) \| \mathcal{N}(\mu^\ast_i,\Sigma^\ast_i))$, which means that the sender could learn to send private message $m_{i}=p_i+u_i$ that is at least as effective as the non-private message
$m^\prime_{i}$. In this manner, the performance degradation is expected to be well alleviated.

% ----------------------------- dp guarantee new -----------------------------

\subsection{Privacy-aware Message Receiver}

As shown in our motivating example, the message receiver with knowledge a priori could statistically reduce the communication error in privacy-preserving scenarios. In the practical design, this motivation could be naturally instantiated with the gradient flow between the message sender and the message receiver.

Specifically, agent $i$ first concatenates all the received privatized messages as $\bm{m}_{(-i)i}:=\{m_{ji}\}^N_{j=1, j\neq i}$ and then decodes $\bm{m}_{(-i)i}$ into an aggregated message $q_i=f_i^r(\bm{m}_{(-i)i}\mid \theta_i^r)$ with the decoding function $f_i^r$ parameterized by $\theta_i^r$. Then a similar argument to the policy gradient theorem \citep{sutton99policy} states that
the gradient of the receiver is
\begin{align*}
    \nabla_{\theta_i^r} \mathcal{J}(\theta_i^r)&= \mathbb{E}_{\boldsymbol{\tau}, \boldsymbol{o}, \boldsymbol{a}}\left[\mathbb{E}_{\pi_i}[ \nabla_{\theta_i^r} f_{i}^{r}\left(q_{i} \mid \bm{m}_{(-i)i}\right) \right.\\
    &\left.\cdot\nabla_{q_{i}}\log \pi_{i}\left(a_{i} \mid o_{i},q_{i}\right) Q^{\boldsymbol{\pi}}(\boldsymbol{a}, \boldsymbol{o})]\right]\,,
\end{align*}
where $\mathcal{J}(\theta_i^r)=\mathbb{E}[G^1\mid \bm{\pi}]$ is the cumulative discounted reward from the starting state.
In this way, the receiver could utilize the prior knowledge $\sigma_i$ of the privacy-preserving sender encoded in the gradient during the optimization process. Please refer to Appendix \ref{sec:algo_pseudo_code} and 
Appendix \ref{sec:app:training} for the complete pseudo code of DPMAC, and detailed optimization process of the message senders and receivers, respectively.

% \input{contents/NE_analysis}
% \subsection{Privacy Analysis and Gradient Clip}
% \section{Analysis}

\begin{figure*}[!htbp]
    \centering%
    \begin{subfigure}{0.9\textwidth}
     
    \includegraphics[width=1\textwidth]{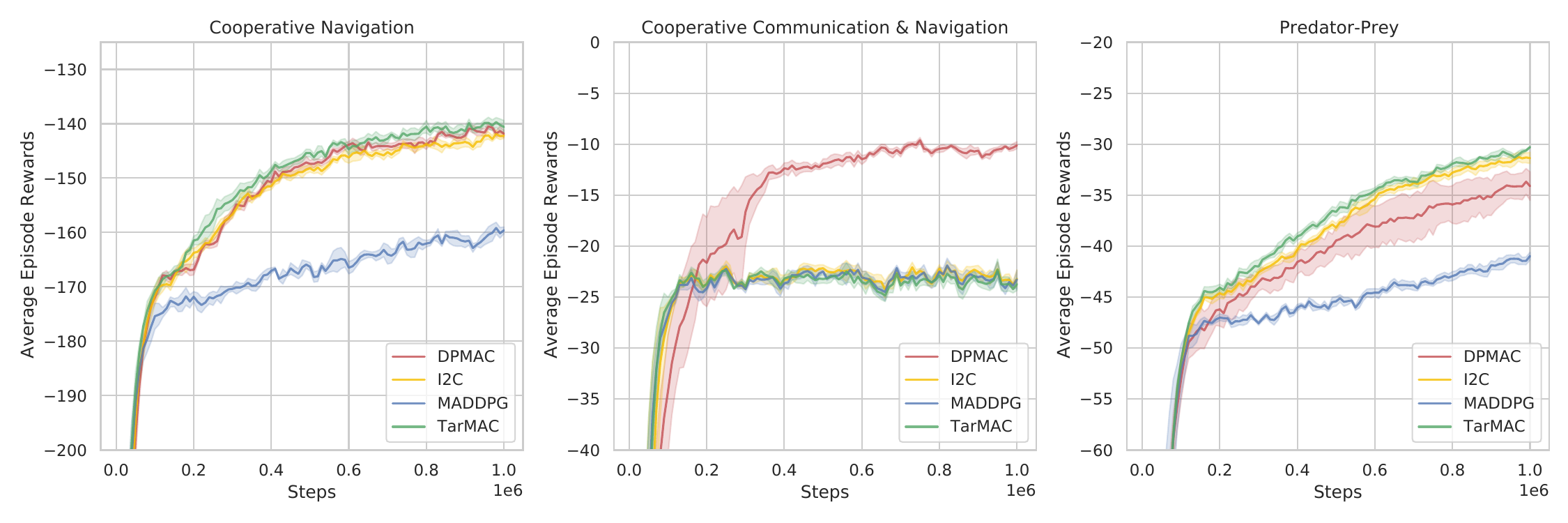}
    \caption{Performance of DPMAC, TarMAC, I2C, and MADDPG on three MPE tasks. 
    % Note that on the PP task,  DPMAC ($\epsilon=0.10$) is shown.
    }
    \label{fig: communication exp}
    \end{subfigure}
    
    \begin{subfigure}{0.9\textwidth}
    \includegraphics[width=1\textwidth]{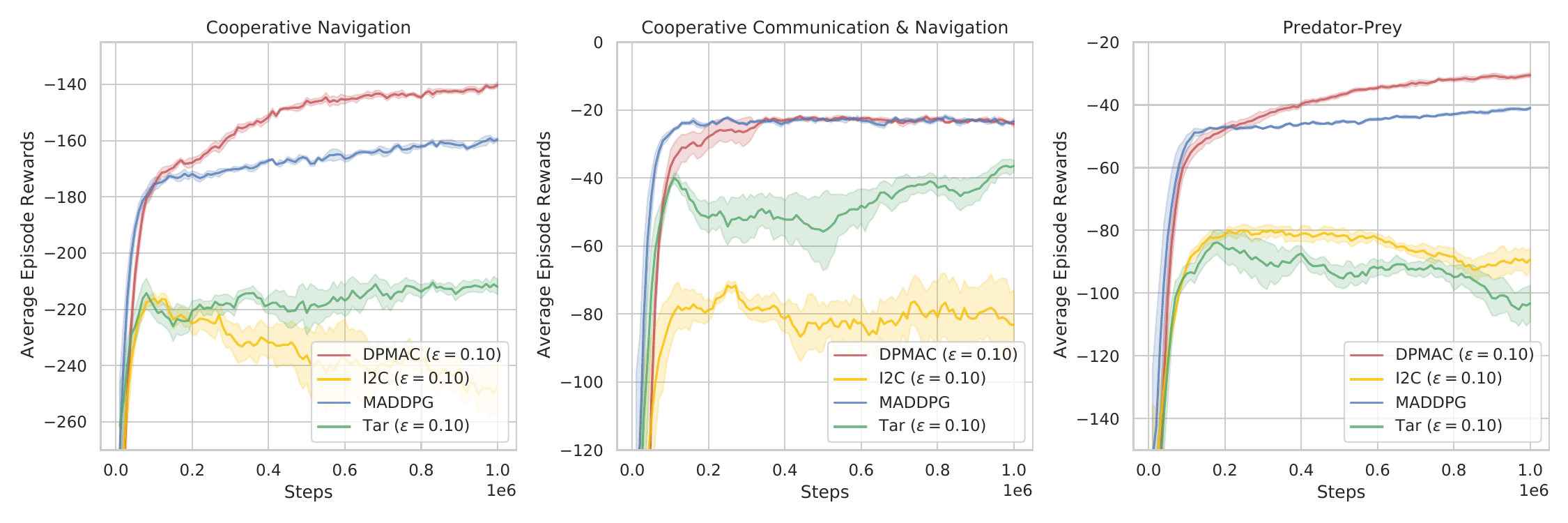}
    \caption{Performance of different algorithms under the privacy budget $\epsilon=0.10$. MADDPG (non-communication) is also displayed for comparison.}
    \label{fig: privact exp 0.1}
    \end{subfigure}
    
     \centering
    \begin{subfigure}{0.9\textwidth}
    \includegraphics[width=1\textwidth]{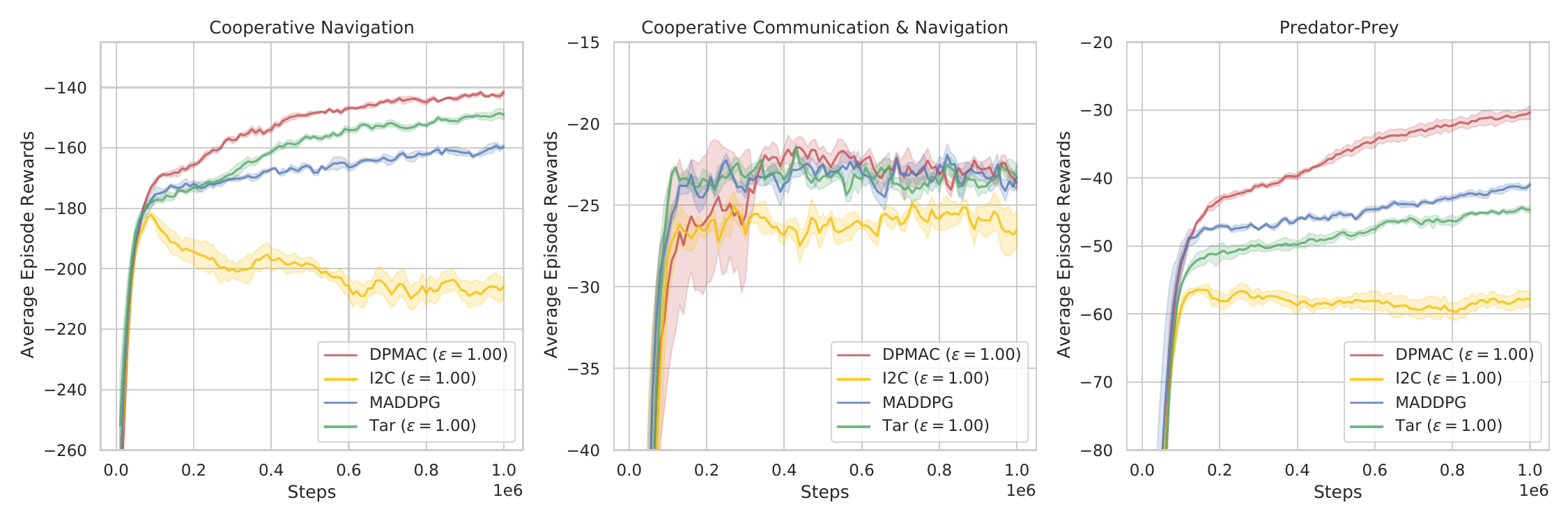}
    \caption{Performance of different algorithms under the privacy budget $\epsilon=1.0$. MADDPG (non-communication) is also displayed for comparison.}
    \label{fig: privacy exp 1.0}
    \end{subfigure}
    
     \centering
    \begin{subfigure}{0.9\textwidth}
    \includegraphics[width=1\textwidth]{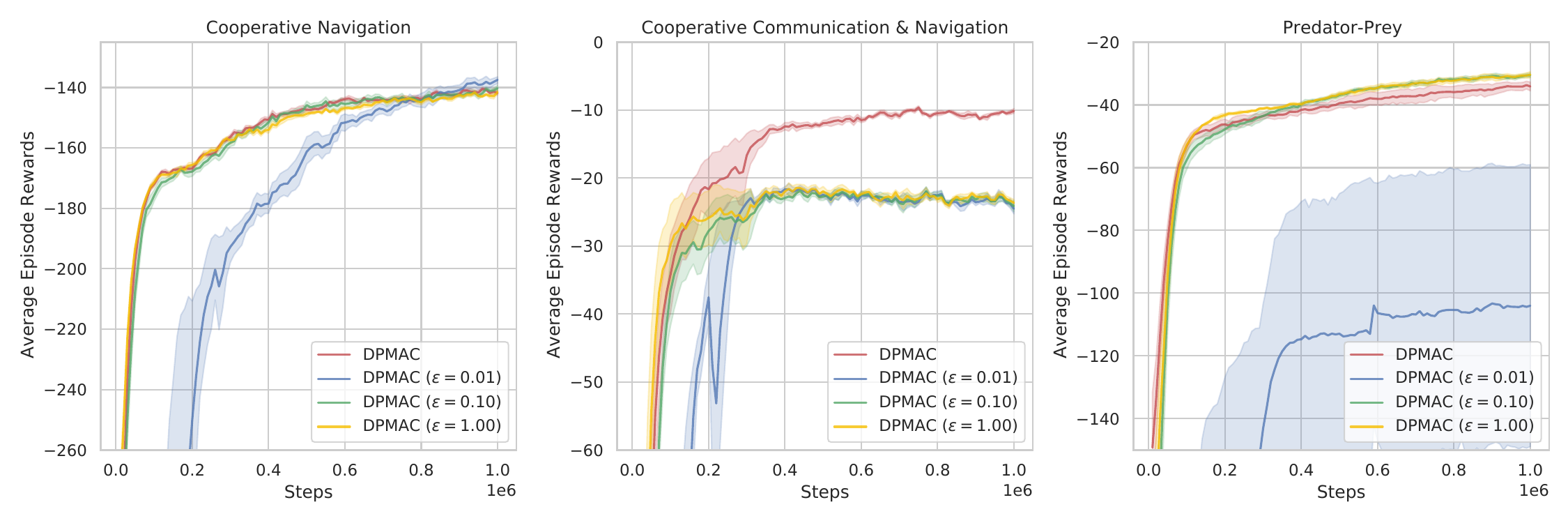}
    \caption{Performance of DPMAC under different privacy budgets ($\epsilon=0.01,0.10,1.00$).}
    \label{fig: privacy exp dpmac}
    \end{subfigure}
    
    \caption{Performance of DPMAC and baseline algorithms. The curves are averaged over $5$ seeds. Shaded areas denote $1$ standard deviation.}\label{fig:fig_single_step_exp}
\end{figure*}

\label{sec: privacy analysis}
% \subsection{Privacy Guarantee}
% \input{contents/privacy_analysis}
\section{Privacy-preserving Equilibrium Analysis}\label{sec:equilibrium_analysis}
% Many cooperative multi-agent games enjoy the existence of NE, which ensures the convergence of iterative algorithms. Under the privacy constraints, however, the existence of NE can no longer be guaranteed even if the original game admits NE. 
% As the convergence of MARL algorithms could depend on the existence of an equilibrium, we investigate such existence in single-step games and extend the result to multi-step games.
% \zhao{
As aforementioned, when considering the privacy constraints, the ``cooperative" multi-agent games will \textit{not} be purely cooperative anymore, due to the appearance of the trade-off between the team utility and each player's personal privacy. As the convergence of MARL algorithms could depend on the existence of NE, we first investigate such existence in privacy-preserving single-step games and then extend the result to privacy-preserving multi-step games.
% }

\subsection{Single-step Games}

We study a class of two-player collaborative games, denoted as \textit{collaborative game with privacy (CGP)}. The game 
% is featured with 
{\revise involves}
two agents, each equipped with a privacy parameter $p_n$, $n \in \{1, 2\}$. The value of $p_n$ represents the importance of privacy to agent $n$, with the larger value referring to greater importance.
{\revise Let $\cM$ be some message mechanism.}
We {\revise denote} the privacy loss {\revise by} $c^{\cM}(p_n)$, which measures the quantity of the potential privacy leakage and is formally defined in Definition \ref{def: privacy loss function}.
% how much privacy might be revealed.
% and use $b\left( V_n, V^{\cM}_n(p_1,p_2)   \right)$ as the utility gained by measuring the gap between private value function $V_n^{\cM}(p_1,p_2)$ 
Besides, {\revise let} $b\left( V_n, V^{\cM}_n(p_1,p_2)   \right)$ {\revise be} the utility gained by measuring the gap between private value function $V_n^{\cM}(p_1,p_2)$ 
% over 
{\revise and}
non-private value function $V_n$. Then the trade-off between the utility and the privacy is depicted by the total utility function $u_n(p_1,p_2)$ in Equation (\ref{eq: utility function in CGP}). {\revise The} formal definition of CGP is given in Definition \ref{def: CGP}.
See more details in Appendix \ref{appendix: single step game in appendix}.

\begin{definition}[Collaborative game with privacy (CGP)] \label{def: CGP}
% The collaborative game with privacy is denoted by a tuple $\langle\mathcal{N}, \Sigma, \mathcal{U}\rangle$, where the set of players is $\mathcal{N}=\{1,2\}$, their actions are $\Sigma=\left\{p_{1}, p_{2}\right\}$ with $p_{1}, p_{2} \in[0,1]$ representing the privacy level, and their utility functions are $\mathcal{U}=\left\{u_{1}, u_{2}\right\}$,
The collaborative game with privacy is denoted by a tuple $\langle\mathcal{N}, \Sigma, \mathcal{U}\rangle$, where $\mathcal{N}=\{1,2\}$ is the the set of players, $\Sigma=\left\{p_{1}, p_{2}\right\}$ is the action set with $p_{1}, p_{2} \in[0,1]$ representing the privacy level, and $\mathcal{U}=\left\{u_{1}, u_{2}\right\}$ is the set of utility functions satisfying $\forall n \in \mathcal{N}$,
\begin{align}
\label{eq: utility function in CGP}
 u_{n}\left(p_{1}, p_{2}\right)=B_{n} b\left( V_n, V^{\cM}_n(p_1,p_2)   \right)-C_{n}^{\cM} c^{\cM}\left(p_{n}\right)\,.
\end{align}
\end{definition}
Then the following theorem shows that if changes in the value function of each player can be expressed as a change in their own privacy parameter, then CGP is a potential game and a pure NE thereafter exists. The proof is deferred to Appendix \ref{appendix: single step game in appendix}.
% \begin{theorem}[CGP's NE guarantee (informal)]
% \label{theorem: NE in CGP}
% The collaborative game with privacy has at least one non-trivial pure-strategy Nash equilibrium if $\partial_{p_{1}}^{i} V_{1}=\partial_{p_{2}}^{i} V_{2}$, $\forall i \in \{1,2\}$.
% \end{theorem}
\begin{theorem}[CGP's NE guarantee]
\label{theorem: NE in CGP}
The collaborative game with privacy has at least one non-trivial pure-strategy Nash equilibrium if $\partial_{p_{1}}^{i} V_{1}=\partial_{p_{2}}^{i} V_{2}$, $\forall i \in \{1,2\}$.
\end{theorem}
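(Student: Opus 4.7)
The plan is to show that CGP is an \emph{exact potential game} in the sense of \citet{monderer1996potential} and then invoke the standard fact that any continuous potential on the compact action set $[0,1]^2$ attains a maximum, every such maximizer being a pure-strategy Nash equilibrium.

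The first step is to verify the integrability (cross-partial) condition
\[\partial_{p_2}\partial_{p_1} u_1(p_1, p_2) = \partial_{p_1}\partial_{p_2} u_2(p_1, p_2).\]
Since $C_n^{\cM} c^{\cM}(p_n)$ depends on $p_n$ only, the privacy-loss term contributes nothing to these mixed derivatives, so the condition reduces to matching the mixed partials of $B_n b(V_n, V_n^{\cM}(p_1,p_2))$ across $n \in \{1,2\}$. Expanding by the chain rule yields expressions in the own-variable derivatives $\partial_{p_1}^i V_1$ and $\partial_{p_2}^i V_2$ for $i \in \{1,2\}$, and the symmetry hypothesis asserts precisely that these paired quantities coincide, so the two expansions line up term for term.

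Once integrability is in hand, I would construct an exact potential via the path-integral formula
\[\Phi(p_1, p_2) = \int_0^{p_1} \partial_{s} u_1(s, 0)\,ds + \int_0^{p_2} \partial_{s} u_2(p_1, s)\,ds,\]
verify $\partial_{p_n}\Phi = \partial_{p_n} u_n$ using the cross-partial equality (a standard Poincaré-style calculation), and conclude that CGP is an exact potential game. Continuity of $u_n$ on the compact set $[0,1]^2$ (inherited from the regularity of $V_n^{\cM}$, $b$, and $c^{\cM}$) then ensures $\Phi$ attains a maximum at some $(p_1^\ast, p_2^\ast)$, and by the potential property a unilateral deviation by any player cannot increase $u_n$ at that point, giving the required pure Nash equilibrium. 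Non-triviality (excluding degenerate corner profiles such as $(0,0)$) follows from checking first-order conditions at the boundary against the monotonicity of $b$ and $c^{\cM}$.

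The main obstacle I expect is the chain-rule bookkeeping in the integrability step: the hypothesis only controls the first two \emph{own-variable} derivatives $\partial_{p_n}^i V_n$, so the expansion of the mixed partial $\partial_{p_{-n}}\partial_{p_n} b(V_n, V_n^{\cM})$ must be organized so that any cross-derivative terms $\partial_{p_1}\partial_{p_2} V_n^{\cM}$ either do not appear or cancel between the $n=1$ and $n=2$ sides. This amounts to an implicit structural assumption on the utility-gap function $b$ (likely made precise in the appendix definition), and cleanly invoking it is the subtle part of the argument.
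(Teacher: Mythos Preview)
Your proposal is correct and follows essentially the same route as the paper: verify the mixed-partial condition $\partial_{p_1}\partial_{p_2}u_1=\partial_{p_1}\partial_{p_2}u_2$ (noting the $c^{\cM}(p_n)$ term drops out), expand the surviving $b$-terms by the chain rule, and then invoke Monderer--Shapley to obtain a pure NE. The paper skips your explicit path-integral construction of $\Phi$ and simply cites the Monderer--Shapley existence theorem once the cross-partial identity is checked; the chain-rule bookkeeping you flag as the main obstacle is exactly what the paper's displayed identity \[(\partial_{V_n^{\cM}}^2 b)\bigl(\partial_{p_1}V_1^{\cM}-\partial_{p_2}V_2^{\cM}\bigr)=(\partial_{V_n^{\cM}} b)\bigl(\partial_{p_1}\partial_{p_2}V_2^{\cM}-\partial_{p_1}\partial_{p_2}V_1^{\cM}\bigr)\] is meant to resolve under the symmetry hypothesis.
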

% A formal statement of the theorem, along with a rigorous proof, is deferred to Theorem \ref{theorem: single round binary sums} in the Appendix.

\paragraph{Equilibrium in single round binary sums}
Let us revisit our motivating example. Armed with the CGP framework, it is immediate that the single round binary sums game guarantees the existence of a NE. {\revise This result is formally stated in Theorem \ref{theorem: single round binary sums} in Appendix \ref{appendix: single step game in appendix}.}

\subsection{Multi-step Games}
We now consider an extended version of single round binary sums named \textit{multiple round sums}. Consider an $N$-player game where player $i$ owns a saving $x_{i,t}$. Rather than sending a binary bit, the agent can choose to give out $b_{i,t}$ at round $t$. Meanwhile, each player $i$ selects privacy level $p_{i,t}$ and sends messages to each other with a sender $f_i^s$ encoding the information of $b_{i,t}$ with the privacy level $p_{i,t}$. The reward of the agent is designed to find a good trade-off between privacy and utility.
% The setting of the game is thus similar to the empirical implementation of DPMAC. 

We first transform this game into a Markov potential game (MPG), with the reward of each agent transformed into a combination of the team reward and the individual reward. Then with existing theoretical results from \citet{DBLP:conf/iclr/MacuaZZ18}, we present the following result while deferring its proof to Appendix~\ref{appendix: multi-step game}.
   
\begin{theorem}[NE guarantee in multiple round sums]
\label{theorem: NE Guarantee in Multiple Round Sums}
If Assumptions \ref{asmp: 1}, \ref{asmp: 2}, \ref{asmp: 3}, \ref{asmp: 4} (see Appendix \ref{appendix: multi-step game}) are satisfied, our MPG has a NE with potential function $J$ defined as,
    \begin{align}
    J(x_{t}, \pi(x_t))=\sum_{j\in [N]} \left( (1-p_{j,t})b_{j,t}+\alpha x_{j,t}+\beta p_{i,t}  \right)\,.
    \end{align}
\end{theorem}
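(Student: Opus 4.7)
The plan is to identify the multiple round sums game as a Markov Potential Game (MPG) in the sense of \citet{DBLP:conf/iclr/MacuaZZ18} and then invoke their existence result for stationary Nash equilibria. First, I would write each player's stage reward explicitly as $r_{i,t} = J_t + h_{i,t}$, where the candidate potential is $J_t = \sum_{j \in [N]}\bigl((1-p_{j,t})b_{j,t}+\alpha x_{j,t}\bigr)+\beta p_{i,t}$ and $h_{i,t}$ collects every term that does not depend on player $i$'s own action $(b_{i,t},p_{i,t})$. Intuitively, the sum $\sum_j (1-p_{j,t})b_{j,t}$ captures the collaborative utility of information communicated under the privacy-induced degradation factor, $\alpha x_{j,t}$ is the public savings term, and $\beta p_{i,t}$ is player $i$'s personal benefit from keeping privacy level $p_{i,t}$; this split mirrors the utility--privacy trade-off in Theorem~\ref{theorem: NE in CGP}, promoted to the dynamic setting.

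Second, I would verify Assumptions~\ref{asmp: 1}--\ref{asmp: 4} for this setup: the per-agent action set $\cB\times[0,1]$ is compact, the stage reward is smooth in $(b_{i,t},p_{i,t})$, the transition $x_{i,t+1}=g(x_{i,t},b_{i,t})$ can be taken smooth and player-separable, and the policy parameterization $\pi_i(\cdot\mid x_t)$ yields differentiable value functions. Most of these checks are routine, and the player-separability of the dynamics is exactly what Assumption~\ref{asmp: 3} enforces. With the assumptions in place, the MPG characterization of \citet{DBLP:conf/iclr/MacuaZZ18} reduces the problem to showing that $r_{i,t}-J_t=h_{i,t}$ does not depend on player $i$'s action, which holds by construction; the policy-gradient identity then lifts this stage-level condition to the value-function identity $\nabla_{\pi_i}V_i^{\boldsymbol{\pi}}=\nabla_{\pi_i}V^{J,\boldsymbol{\pi}}$. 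Since the joint policy space is compact and $V^{J,\boldsymbol{\pi}}$ is continuous, a maximizer of $V^{J,\boldsymbol{\pi}}$ exists, and any such maximizer is a stationary NE of the original game.

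The main obstacle will be the value-function lift: the per-step decomposition $r_{i,t}=J_t+h_{i,t}$ does not automatically pass to cumulative returns because the transition of $x_{i,t+1}$ couples the future rewards of all players with $b_{i,t}$, which is player $i$'s action, so the ``independent-of-own-action'' structure of $h_{i,t}$ could in principle be broken once discounted returns are considered. Handling this cleanly relies on Assumption~\ref{asmp: 3}'s player-separable transitions, together with absorbing any transition-induced coupling into $J_t$ following the construction in \citet{DBLP:conf/iclr/MacuaZZ18}. A secondary and more bookkeeping-style point is reconciling the stated $\beta p_{i,t}$ (rather than a symmetric $\beta p_{j,t}$ inside the sum): I would argue this term contributes an additive shift whose derivative with respect to $\pi_i$ matches exactly the derivative of $V_i$ with respect to $\pi_i$, so the gradient identity underpinning the MPG condition remains intact.
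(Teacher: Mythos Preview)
Your proposal is correct and takes essentially the same route as the paper: decompose the stage reward into a common-plus-residual form and invoke the MPG characterization of \citet{DBLP:conf/iclr/MacuaZZ18} (restated in the appendix) to conclude NE existence under Assumptions~\ref{asmp: 1}--\ref{asmp: 4}. The only substantive difference is bookkeeping: the paper's decomposition takes the common term to be $\sum_{j}(1-p_{j,t})b_{j,t}$ with non-common term $\Theta_i=\alpha x_{i,t}+\beta p_{i,t}$, and it discharges condition~(ii) of the MPG characterization by explicitly requiring that the gradient between $m_{-i}$ and the message function $f_i$ vanish, whereas you fold more into $J_t$ and lean on player-separable transitions instead.
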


\section{Experiments}
{\revise In this section, we present the experiment results and corresponding experiment analyses. Please see Appendix \ref{sec:app_addition_exp_analyses} for more detailed  analyses of experiment results.}

% \vspace{-0.1in}
\paragraph{Baselines} We implement our DPMAC upon MADDPG \citep{MADDPG_2017} (see Appendix \ref{sec:app:implementation} for concrete implementation details) and evaluate it against TarMAC \citep{TarMAC_2019}, I2C \citep{I2C_2020}, and MADDPG. All algorithms are tested with and without the privacy requirement except for MADDPG, which involves no communication among agents. Since TarMAC and I2C do not have a local sender and have no DP guarantee, we add Gaussian noise to their receiver according to the noise variance specified in Theorem \ref{theorem: privacy for dpmac} for a fair comparison. 
Please see Appendix \ref{sec:app:training} for more training details.
% \textit{We remark that the code will be made publicly available once this manuscript is accepted.}

% \vspace{-0.1in}
\paragraph{Environments}  We evaluate the algorithms on the multi-agent particle environment (MPE) \citep{mordatch2017emergence}, which is with continuous observation and discrete action space. This environment is commonly used among existing MARL literature \citep{MADDPG_2017,ATOC_2018,I2C_2020,intention_sharing_2021}. We evaluate a wide range of tasks in MPE, including cooperative navigation (CN), cooperative communication and navigation (CCN), and predator prey (PP). 
% Following previous works in the privacy-preserving literature,  
% all the algorithms will be injected privacy noises accordingly to 
More details on the environmental settings are given in Appendix \ref{appendix: environment details}.

% \vspace{-0.1in}
\paragraph{Experiment results without privacy} 
We first compare DPMAC 
% is first compared 
with TarMAC, I2C, and MADDPG on three MPE tasks without the privacy requirement. As shown in Figure  \ref{fig: communication exp}, DPMAC outperforms baselines on CCN task, and has comparable performance on CN and PP tasks. 
% Note that on PP task we show the performance of DPMAC with $\epsilon=0.10$ due to even better performance over its non-private variant. 
% Comparison between non-private DPMAC and baselines is provided in Figure \ref{fig: baseline worse} in Appendix \ref{sec: supplementary materials for Experiments}.
More detailed analyses of the experiment results without privacy are deferred to Appendix \ref{sec:further_analyses_wo_privacy}.

% \vspace{-0.1in}
\paragraph{Experiment results with privacy} 
We now investigate the performance of algorithms with communication under privacy constraints.
In particular, Figure \ref{fig: privact exp 0.1} and \ref{fig: privacy exp 1.0} show the performance under the privacy budget $\epsilon=0.10,1.0$ and both with $\delta=10^{-4}$. We also include MADDPG as a non-communication baseline method. 
Overall, the privacy constraints impose obvious disturbances to the performance of all algorithms. Specifically, the performance of TarMAC and I2C degenerates significantly and becomes even inferior to the performance of MADDPG. However, in most cases, the performance of DPMAC with privacy constraints only suffers a slight decline and is still superior or comparable to the performance of MADDPG.
% More concrete analyses on the experiment results with privacy are postponed to Appendix \ref{sec:further_analyses_w_privacy}.
See Appendix \ref{sec:further_analyses_w_privacy} for more concrete analyses on the experiment results with privacy.

% \vspace{-0.1in}
% \paragraph{Variance adjustment of DPMAC} 
\paragraph{DPMAC under Different Privacy Budgets} 
In Figure \ref{fig: privacy exp dpmac}, we further present the comparison between the performance of DPMAC under different privacy budgets. When $\epsilon=0.01$, DPMAC still gains remarkable performance on CN and CCN tasks,
while other baselines' performance suffers serious degeneration, as we have analyzed above.
% On the other hand, we note that 
Besides,
on the PP task under the privacy constraint with $\epsilon=0.01$, DPMAC also suffers clear performance degradation. 
Overall, the experiments of DPMAC under different privacy budgets also show that DPMAC could automatically adjust the variance of the stochastic message sender so that it learns a noise-robust message representation. As shown in Figure \ref{fig: privacy exp dpmac}, DPMAC gains very close performance when $\epsilon=0.1$ and $\epsilon=1.0$, 
though the privacy requirements of $\epsilon=0.1$ and $\epsilon=1.0$ differ by one order of magnitude. 
However, one can see large performance gaps for the same baseline algorithms under different $\epsilon$ from Figure \ref{fig: privact exp 0.1} and \ref{fig: privacy exp 1.0}. 
For clarity, we also present these performance gaps of TarMAC and I2C algorithms in Figure \ref{fig: dp tarmac} and \ref{fig: dp i2c}.
Please see Appendix \ref{sec:further_analyses_on_different_privacy} for more detailed analyses of the performance of DPMAC under different privacy budgets.

\section{Conclusion}
\label{sec: conclusion}

In this paper, we study the privacy-preserving communication in MARL. 
Motivated by a simple yet effective example of {\revise the} binary sums game, 
% we gain insights of two principles  of the communication framework, namely a privacy-preserving local sender and a privacy-aware receiver. 
% We subsequently 
we propose DPMAC, a new efficient communicating MARL algorithm that preserves agents' privacy through DP.
% and instantiates the described principles.
Our algorithm is justified both theoretically and empirically. Besides, to show that the privacy-preserving communication problem is learnable, we analyze the single-step game and the multi-step game via the notion of MPG and show the existence of the Nash equilibrium. This existence further implies the learnability of several instances of MPG under privacy constraints.
% Extensive experiments conducted on 3 MPE tasks with varying privacy constraints demonstrate the effectiveness of DPMAC when compared to baseline methods on multiple tasks both with and without the privacy constraints.
Extensive experiments conducted on 3 MPE tasks with varying privacy constraints demonstrate the effectiveness of DPMAC against the baseline methods. 

% when compared to baseline methods on multiple tasks both with and without the privacy constraints.

% Though we make the first step to establish an efficient MARL algorithm with differential private communication, some interesting questions remain open. The first question is that it is still unclear for us whether there exists the Nash equilibrium in private competitive games. Besides, on the empirical side, investigating the performance of DPMAC in competitive games with privacy-preserving communication might also be interesting and valuable. 

% \section*{Ethical Statement}

% % There are no ethical issues.
% Our work establishes the first privacy-preserving communication framework for MARL, and currently we are not aware of any potential ethical concerns.

% \clearpage
%% The file named.bst is a bibliography style file for BibTeX 0.99c
\bibliographystyle{named}
\bibliography{ijcai23.bib}

\begin{thebibliography}{}

\bibitem[\protect\citeauthoryear{Abadi \bgroup \em et al.\egroup
  }{2016}]{abadi2016deep}
Martin Abadi, Andy Chu, Ian Goodfellow, H~Brendan McMahan, Ilya Mironov, Kunal
  Talwar, and Li~Zhang.
\newblock Deep learning with differential privacy.
\newblock In {\em ACM CCS}, 2016.

\bibitem[\protect\citeauthoryear{Alvim \bgroup \em et al.\egroup
  }{2011}]{alvim2011differential}
M{\'a}rio~S Alvim, Miguel~E Andr{\'e}s, Konstantinos Chatzikokolakis, Pierpaolo
  Degano, and Catuscia Palamidessi.
\newblock Differential privacy: on the trade-off between utility and
  information leakage.
\newblock In {\em International Workshop on Formal Aspects in Security and
  Trust}. Springer, 2011.

\bibitem[\protect\citeauthoryear{Balle \bgroup \em et al.\egroup
  }{2016}]{balle2016differentially}
Borja Balle, Maziar Gomrokchi, and Doina Precup.
\newblock Differentially private policy evaluation.
\newblock In {\em ICML}. PMLR, 2016.

\bibitem[\protect\citeauthoryear{Beimel \bgroup \em et al.\egroup
  }{2008}]{DBLP:conf/crypto/BeimelNO08}
Amos Beimel, Kobbi Nissim, and Eran Omri.
\newblock Distributed private data analysis: Simultaneously solving how and
  what.
\newblock In {\em Advances in Cryptology - {CRYPTO} 2008}. Springer, 2008.

\bibitem[\protect\citeauthoryear{Chen \bgroup \em et al.\egroup
  }{2020}]{GS-WGAN_2020}
Dingfan Chen, Tribhuvanesh Orekondy, and Mario Fritz.
\newblock {GS-WGAN:} {A} gradient-sanitized approach for learning
  differentially private generators.
\newblock In {\em NeurIPS 33}, 2020.

\bibitem[\protect\citeauthoryear{Cheu}{2021}]{cheu2021differential}
Albert Cheu.
\newblock Differential privacy in the shuffle model: A survey of separations.
\newblock {\em arXiv:2107.11839}, 2021.

\bibitem[\protect\citeauthoryear{Das \bgroup \em et al.\egroup
  }{2019}]{TarMAC_2019}
Abhishek Das, Th{\'{e}}ophile Gervet, Joshua Romoff, Dhruv Batra, Devi Parikh,
  Mike Rabbat, and Joelle Pineau.
\newblock Tarmac: Targeted multi-agent communication.
\newblock In {\em ICML 2019}. {PMLR}, 2019.

\bibitem[\protect\citeauthoryear{Ding \bgroup \em et al.\egroup
  }{2020}]{I2C_2020}
Ziluo Ding, Tiejun Huang, and Zongqing Lu.
\newblock Learning individually inferred communication for multi-agent
  cooperation.
\newblock In {\em NeurIPS 33}, 2020.

\bibitem[\protect\citeauthoryear{Dong \bgroup \em et al.\egroup
  }{2020}]{dong2020distributed}
Tao Dong, Xiangyu Bu, and Wenjie Hu.
\newblock Distributed differentially private average consensus for multi-agent
  networks by additive functional laplace noise.
\newblock {\em Journal of the Franklin Institute}, (6), 2020.

\bibitem[\protect\citeauthoryear{Dubey and
  Pentland}{2020}]{dubeyP2020differentially}
Abhimanyu Dubey and Alex~'Sandy' Pentland.
\newblock Differentially-private federated linear bandits.
\newblock In {\em NeurIPS 33}, 2020.

\bibitem[\protect\citeauthoryear{Dubey}{2021}]{dubey21no}
Abhimanyu Dubey.
\newblock No-regret algorithms for private gaussian process bandit
  optimization.
\newblock In {\em AISTATS 2021}. {PMLR}, 2021.

\bibitem[\protect\citeauthoryear{Dwork \bgroup \em et al.\egroup
  }{2014}]{dwork2014algorithmic}
Cynthia Dwork, Aaron Roth, et~al.
\newblock The algorithmic foundations of differential privacy.
\newblock {\em Found. Trends Theor. Comput. Sci.}, (3-4), 2014.

\bibitem[\protect\citeauthoryear{Dwork}{2006}]{DP_2006}
Cynthia Dwork.
\newblock Differential privacy.
\newblock In {\em ICALP}. Springer, 2006.

\bibitem[\protect\citeauthoryear{Foerster \bgroup \em et al.\egroup
  }{2016}]{DBLP:conf/nips/FoersterAFW16}
Jakob~N. Foerster, Yannis~M. Assael, Nando de~Freitas, and Shimon Whiteson.
\newblock Learning to communicate with deep multi-agent reinforcement learning.
\newblock In {\em NeurIPS 29}, 2016.

\bibitem[\protect\citeauthoryear{Garcelon \bgroup \em et al.\egroup
  }{2021}]{garcelon2021local}
Evrard Garcelon, Vianney Perchet, Ciara Pike{-}Burke, and Matteo Pirotta.
\newblock Local differential privacy for regret minimization in reinforcement
  learning.
\newblock In {\em NeurIPS 34}, 2021.

\bibitem[\protect\citeauthoryear{Hassan \bgroup \em et al.\egroup
  }{2020}]{HassanRC20}
Muneeb~Ul Hassan, Mubashir~Husain Rehmani, and Jinjun Chen.
\newblock Differential privacy techniques for cyber physical systems: {A}
  survey.
\newblock {\em {IEEE} Commun. Surv. Tutorials}, 2020.

\bibitem[\protect\citeauthoryear{Hausknecht and
  Stone}{2015}]{hausknecht2015deep}
Matthew Hausknecht and Peter Stone.
\newblock Deep recurrent q-learning for partially observable mdps.
\newblock In {\em 2015 aaai fall symposium series}, 2015.

\bibitem[\protect\citeauthoryear{Jiang and Lu}{2018}]{ATOC_2018}
Jiechuan Jiang and Zongqing Lu.
\newblock Learning attentional communication for multi-agent cooperation.
\newblock In {\em NeurIPS 31}, 2018.

\bibitem[\protect\citeauthoryear{Kim \bgroup \em et al.\egroup
  }{2021}]{intention_sharing_2021}
Woojun Kim, Jongeui Park, and Youngchul Sung.
\newblock Communication in multi-agent reinforcement learning: Intention
  sharing.
\newblock In {\em ICLR}, 2021.

\bibitem[\protect\citeauthoryear{Kingma and Welling}{2014}]{KingmaW13}
Diederik~P. Kingma and Max Welling.
\newblock Auto-encoding variational bayes.
\newblock In {\em ICLR}, 2014.

\bibitem[\protect\citeauthoryear{Kumari and
  Chakravarthy}{2016}]{kumari2016cooperative}
Valli Kumari and Srinivasa Chakravarthy.
\newblock Cooperative privacy game: a novel strategy for preserving privacy in
  data publishing.
\newblock {\em Human-centric Computing and Information Sciences}, 2016.

\bibitem[\protect\citeauthoryear{Leonardos \bgroup \em et al.\egroup
  }{2021}]{leonardos2021global}
Stefanos Leonardos, Will Overman, Ioannis Panageas, and Georgios Piliouras.
\newblock Global convergence of multi-agent policy gradient in markov potential
  games.
\newblock {\em arXiv:2106.01969}, 2021.

\bibitem[\protect\citeauthoryear{Li \bgroup \em et al.\egroup }{2021}]{LiYLL21}
Ziyan Li, Quan Yuan, Guiyang Luo, and Jinglin Li.
\newblock Learning effective multi-vehicle cooperation at unsignalized
  intersection via bandwidth-constrained communication.
\newblock In {\em 94th {IEEE} Vehicular Technology Conference, {VTC} 2021}.
  {IEEE}, 2021.

\bibitem[\protect\citeauthoryear{Liao \bgroup \em et al.\egroup
  }{2021}]{liao2021locally}
Chonghua Liao, Jiafan He, and Quanquan Gu.
\newblock Locally differentially private reinforcement learning for linear
  mixture markov decision processes.
\newblock {\em arXiv preprint arXiv:2110.10133}, 2021.

\bibitem[\protect\citeauthoryear{Lin \bgroup \em et al.\egroup
  }{2021}]{lin2021learningto}
Toru Lin, Jacob Huh, Christopher Stauffer, Ser{-}Nam Lim, and Phillip Isola.
\newblock Learning to ground multi-agent communication with autoencoders.
\newblock In {\em NeurIPS 34}, 2021.

\bibitem[\protect\citeauthoryear{Lowe \bgroup \em et al.\egroup
  }{2017}]{MADDPG_2017}
Ryan Lowe, Yi~Wu, Aviv Tamar, Jean Harb, Pieter Abbeel, and Igor Mordatch.
\newblock Multi-agent actor-critic for mixed cooperative-competitive
  environments.
\newblock In {\em NeurIPS 30}, 2017.

\bibitem[\protect\citeauthoryear{Macua \bgroup \em et al.\egroup
  }{2018}]{DBLP:conf/iclr/MacuaZZ18}
Sergio~Valcarcel Macua, Javier Zazo, and Santiago Zazo.
\newblock Learning parametric closed-loop policies for markov potential games.
\newblock In {\em ICLR}, 2018.

\bibitem[\protect\citeauthoryear{Mironov}{2017}]{mironov2017renyi}
Ilya Mironov.
\newblock R{\'e}nyi differential privacy.
\newblock In {\em 2017 IEEE 30th Computer Security Foundations Symposium
  (CSF)}. IEEE, 2017.

\bibitem[\protect\citeauthoryear{Monderer and
  Shapley}{1996}]{monderer1996potential}
Dov Monderer and Lloyd~S Shapley.
\newblock Potential games.
\newblock {\em Games and economic behavior}, 1996.

\bibitem[\protect\citeauthoryear{Mordatch and
  Abbeel}{2017}]{mordatch2017emergence}
Igor Mordatch and Pieter Abbeel.
\newblock Emergence of grounded compositional language in multi-agent
  populations.
\newblock {\em arXiv:1703.04908}, 2017.

\bibitem[\protect\citeauthoryear{Nash and others}{1950}]{nash1950equilibrium}
John~F Nash et~al.
\newblock Equilibrium points in n-person games.
\newblock {\em Proceedings of the national academy of sciences}, (1), 1950.

\bibitem[\protect\citeauthoryear{Pejo \bgroup \em et al.\egroup
  }{2019}]{DBLP:journals/popets/PejoTB19}
Balazs Pejo, Qiang Tang, and Gergely Bicz{\'{o}}k.
\newblock Together or alone: The price of privacy in collaborative learning.
\newblock {\em Proc. Priv. Enhancing Technol.}, 2019.

\bibitem[\protect\citeauthoryear{Peng \bgroup \em et al.\egroup
  }{2017}]{peng2017multiagent}
Peng Peng, Quan Yuan, Ying Wen, Yaodong Yang, Zhenkun Tang, Haitao Long, and
  Jun Wang.
\newblock Multiagent bidirectionally-coordinated nets for learning to play
  starcraft combat games.
\newblock {\em CoRR}, abs/1703.10069, 2017.

\bibitem[\protect\citeauthoryear{Rangwala and Williams}{2020}]{RangwalaW20}
Murtaza Rangwala and Ryan Williams.
\newblock Learning multi-agent communication through structured attentive
  reasoning.
\newblock In {\em NeurIPS 33}, 2020.

\bibitem[\protect\citeauthoryear{Sakuma \bgroup \em et al.\egroup
  }{2008}]{sakuma2008privacy}
Jun Sakuma, Shigenobu Kobayashi, and Rebecca~N Wright.
\newblock Privacy-preserving reinforcement learning.
\newblock In {\em ICML}, 2008.

\bibitem[\protect\citeauthoryear{Shalev{-}Shwartz \bgroup \em et al.\egroup
  }{2016}]{shwartz2016safe}
Shai Shalev{-}Shwartz, Shaked Shammah, and Amnon Shashua.
\newblock Safe, multi-agent, reinforcement learning for autonomous driving.
\newblock {\em CoRR}, abs/1610.03295, 2016.

\bibitem[\protect\citeauthoryear{Singh \bgroup \em et al.\egroup
  }{2019}]{SinghJS19}
Amanpreet Singh, Tushar Jain, and Sainbayar Sukhbaatar.
\newblock Learning when to communicate at scale in multiagent cooperative and
  competitive tasks.
\newblock In {\em ICLR}, 2019.

\bibitem[\protect\citeauthoryear{Sukhbaatar \bgroup \em et al.\egroup
  }{2016}]{SukhbaatarSF16}
Sainbayar Sukhbaatar, Arthur Szlam, and Rob Fergus.
\newblock Learning multiagent communication with backpropagation.
\newblock In {\em NeurIPS 29}, 2016.

\bibitem[\protect\citeauthoryear{Sutton \bgroup \em et al.\egroup
  }{1999}]{sutton99policy}
Richard~S. Sutton, David~A. McAllester, Satinder~P. Singh, and Yishay Mansour.
\newblock Policy gradient methods for reinforcement learning with function
  approximation.
\newblock In {\em NeurIPS 12}, 1999.

\bibitem[\protect\citeauthoryear{Tao \bgroup \em et al.\egroup
  }{2022}]{tao2022optimal}
Youming Tao, Yulian Wu, Peng Zhao, and Di~Wang.
\newblock Optimal rates of (locally) differentially private heavy-tailed
  multi-armed bandits.
\newblock In {\em AISTATS}. PMLR, 2022.

\bibitem[\protect\citeauthoryear{Tenenbaum \bgroup \em et al.\egroup
  }{2021}]{tenenbaumKMS2021differentially}
Jay Tenenbaum, Haim Kaplan, Yishay Mansour, and Uri Stemmer.
\newblock Differentially private multi-armed bandits in the shuffle model.
\newblock In {\em NeurIPS 34}, 2021.

\bibitem[\protect\citeauthoryear{Tian \bgroup \em et al.\egroup
  }{2021}]{abs-2112-10374}
Qi~Tian, Kun Kuang, Baoxiang Wang, Furui Liu, and Fei Wu.
\newblock Multi-agent communication with graph information bottleneck under
  limited bandwidth.
\newblock {\em CoRR}, abs/2112.10374, 2021.

\bibitem[\protect\citeauthoryear{Tossou and
  Dimitrakakis}{2017}]{tossouD17achieving}
Aristide Charles~Yedia Tossou and Christos Dimitrakakis.
\newblock Achieving privacy in the adversarial multi-armed bandit.
\newblock In {\em AAAI}, 2017.

\bibitem[\protect\citeauthoryear{Tucker \bgroup \em et al.\egroup
  }{2022}]{abs-2207-00088}
Mycal Tucker, Julie Shah, Roger Levy, and Noga Zaslavsky.
\newblock Towards human-agent communication via the information bottleneck
  principle.
\newblock {\em CoRR}, abs/2207.00088, 2022.

\bibitem[\protect\citeauthoryear{Vaswani \bgroup \em et al.\egroup
  }{2017}]{attention_2017}
Ashish Vaswani, Noam Shazeer, Niki Parmar, Jakob Uszkoreit, Llion Jones,
  Aidan~N Gomez, {\L}ukasz Kaiser, and Illia Polosukhin.
\newblock Attention is all you need.
\newblock In {\em NeurIPS}, 2017.

\bibitem[\protect\citeauthoryear{Wang and Hegde}{2019}]{wang2019privacy}
Baoxiang Wang and Nidhi Hegde.
\newblock Privacy-preserving q-learning with functional noise in continuous
  spaces.
\newblock {\em NeurIPS}, 2019.

\bibitem[\protect\citeauthoryear{Wang \bgroup \em et al.\egroup
  }{2019}]{wang2019efficient}
Lingxiao Wang, Bargav Jayaraman, David Evans, and Quanquan Gu.
\newblock Efficient privacy-preserving nonconvex optimization.
\newblock pages arXiv--1910, 2019.

\bibitem[\protect\citeauthoryear{Wang \bgroup \em et al.\egroup
  }{2020a}]{WangHYQ0R20}
Rundong Wang, Xu~He, Runsheng Yu, Wei Qiu, Bo~An, and Zinovi Rabinovich.
\newblock Learning efficient multi-agent communication: An information
  bottleneck approach.
\newblock In {\em ICML 2020}. {PMLR}, 2020.

\bibitem[\protect\citeauthoryear{Wang \bgroup \em et al.\egroup
  }{2020b}]{NDQ_2020}
Tonghan Wang, Jianhao Wang, Chongyi Zheng, and Chongjie Zhang.
\newblock Learning nearly decomposable value functions via communication
  minimization.
\newblock In {\em ICLR}, 2020.

\bibitem[\protect\citeauthoryear{Wei \bgroup \em et al.\egroup
  }{2019}]{wei2019colight}
Hua Wei, Nan Xu, Huichu Zhang, Guanjie Zheng, Xinshi Zang, Chacha Chen, Weinan
  Zhang, Yanmin~Zhu anda Kai~Xu, and Zhenhui Li.
\newblock Colight: Learning network-level cooperation for traffic signal
  control.
\newblock In {\em ACM CIKM}, 2019.

\bibitem[\protect\citeauthoryear{Yang \bgroup \em et al.\egroup
  }{2020}]{YangNIFZ20}
Jiachen Yang, Alireza Nakhaei, David Isele, Kikuo Fujimura, and Hongyuan Zha.
\newblock {CM3:} cooperative multi-goal multi-stage multi-agent reinforcement
  learning.
\newblock In {\em ICLR}, 2020.

\bibitem[\protect\citeauthoryear{Ye \bgroup \em et al.\egroup
  }{2020}]{ye2020differential_advising}
Dayong Ye, Tianqing Zhu, Zishuo Cheng, Wanlei Zhou, and S~Yu Philip.
\newblock Differential advising in multiagent reinforcement learning.
\newblock {\em IEEE Transactions on Cybernetics}, 2020.

\bibitem[\protect\citeauthoryear{Zhang and Lesser}{2011}]{zhang2011coordinated}
Chongjie Zhang and Victor~R. Lesser.
\newblock Coordinated multi-agent reinforcement learning in networked
  distributed pomdps.
\newblock In {\em AAAI}, 2011.

\bibitem[\protect\citeauthoryear{Zhang \bgroup \em et al.\egroup
  }{2019}]{VBC_2019}
Sai~Qian Zhang, Qi~Zhang, and Jieyu Lin.
\newblock Efficient communication in multi-agent reinforcement learning via
  variance based control.
\newblock In {\em NeurIPS 32}, 2019.

\bibitem[\protect\citeauthoryear{Zhang \bgroup \em et al.\egroup
  }{2020}]{zhang2020succinct}
Sai~Qian Zhang, Qi~Zhang, and Jieyu Lin.
\newblock Succinct and robust multi-agent communication with temporal message
  control.
\newblock In {\em NeurIPS 33}, 2020.

\bibitem[\protect\citeauthoryear{Zhang \bgroup \em et al.\egroup
  }{2021}]{zhang2021taming}
Xin Zhang, Zhuqing Liu, Jia Liu, Zhengyuan Zhu, and Songtao Lu.
\newblock Taming communication and sample complexities in decentralized policy
  evaluation for cooperative multi-agent reinforcement learning.
\newblock In {\em NeurIPS 34}, 2021.

\bibitem[\protect\citeauthoryear{Zheng \bgroup \em et al.\egroup
  }{2020}]{zheng2022locally}
Kai Zheng, Tianle Cai, Weiran Huang, Zhenguo Li, and Liwei Wang.
\newblock Locally differentially private (contextual) bandits learning.
\newblock In {\em NeurIPS 33}, 2020.

\bibitem[\protect\citeauthoryear{Zhou}{2022}]{zhou2022differentiallyrl}
Xingyu Zhou.
\newblock Differentially private reinforcement learning with linear function
  approximation.
\newblock {\em Proc. {ACM} Meas. Anal. Comput. Syst.}, 2022.

\end{thebibliography}

\clearpage
\appendix
\section{Algorithm}\label{sec:algo_pseudo_code}
For completeness, we present the pseudo code of DPMAC in Algorithm \ref{alg:algorithm}.
% ------------- pseudo code -------------
\begin{algorithm*}[tb]
    \caption{Differentially Private Multi-agent Communication (DPMAC)}
    \label{alg:algorithm}
    \textbf{Input}: Privacy parameters of each agent: $\{\epsilon_i\}_{i=1}^{N}$, $\{\delta_i\}_{i=1}^{N}$.
    % \textbf{Parameter}: Optional list of parameters\\
    % \textbf{Output}: Your algorithm's output
    \begin{algorithmic}[1] %[1] enables line numbers
        \STATE Compute the variance $\sigma_i$ of privacy noises according to Theorem \ref{theorem: privacy for dpmac} for each agent $i=1,\ldots,N$;
        \FOR{\text{episode} $e=1,\ldots,$ \text{max-number-of-episodes}}
            \STATE Initialize the message to be sent as $m_{0,i}=\mathbf{0}$ for each agent $i=1,\ldots,N$;
            \FOR{\text{step} $t=1,\ldots,$ \text{max-number-of-steps}}
                \FOR{\text{each} agent $i=1,\ldots,N$}
                    \STATE \text{Receives observation} $o_{t,i}$;
                    \STATE Receives the messages of all other agents $\boldsymbol{m}_{t-1,(-i) i}:=\left\{m_{t-1,j}\right\}_{j=1, j \neq i}^N$, and decodes the messages into $q_{t-1,i}=f_i^r\left(\boldsymbol{m}_{t-1,(-i) i}\mid \theta_{t,i}^r\right)$;
                    \STATE Gets the action $a_{t,i}=\boldsymbol{\pi}_i(\cdot\mid o_{t,i},q_{t-1,i})$ from the actor $\boldsymbol{\pi}_i$ of agent $i$;
                    \STATE \text{Samples  message} $p_{t,i}$ \text{from the message distribution} $\mathcal{N}\left(\mu_{t,i}, \Sigma_{t,i}\right)$, where $\mu_{t,i}=f_i^\mu\left(o_{t,i}, a_{t,i} ; \theta_{t,i}^\mu\right)$ and $\Sigma_{t,i}=f_i^\sigma\left(o_{t,i}, a_{t,i} ; \theta_{t,i}^\sigma\right)$;
                    \STATE \text{Samples the privacy noise} $u_i \sim \mathcal{N}\left(0, \sigma_i^2 \mathbf{I}_d\right)$;
                    \STATE Sends privatized message $m_{t,i}=p_{t,i}+u_{t,i}$ to all other agents;
                    \STATE Executes action $a_{t,i}$ and receives reward $r_{t,i}$;
                    \STATE Adds $(o_{t,i},a_{t,i},r_{t,i},\boldsymbol{m}_{t-1,(-i) i})$ to replay buffer $\mathcal{D}_i$;
                \ENDFOR
            \ENDFOR
            \STATE Update the critic, actor, message sender, and message receiver using a minibatch of samples from the replay buffer $\mathcal{D}_i$ for each agent $i=1,\ldots,N$;
        \ENDFOR
        % \STATE \textbf{return} solution
    \end{algorithmic}
\end{algorithm*}
% ------------- pseudo code -------------

\section{Privacy Analysis}
{\revise In this section, we first present the proof of Theorem \ref{theorem: privacy for dpmac}, which guarantees the $(\epsilon_i,\delta)$-DP for communication at each step. Then we give Corollary \ref{theorem:multi_step_privacy}, which provides episode-level $(\epsilon_i,\delta)$-DP guarantee for communication, together with its proof.}
\subsection{Proof of Theorem \ref{theorem: privacy for dpmac}}
% In this section, we first introduce necessary definitions and lemmas and then give
% the proof of Theorem \ref{theorem: privacy for dpmac}.
% The formal definition of Rényi differential privacy (RDP) and $\ell_2$-sensitivity is given as follows respectively.
{\revise
We first present some necessary definitions and lemmas.
We start by introducing R{\'e}nyi differential privacy (RDP) and $\ell_2$-sensitivity.
}
\begin{definition}[R{\'e}nyi differential privacy, \cite{mironov2017renyi}]
\label{def: RDP}
For $\alpha>1$ and $\rho>0$, a randomized mechanism $f: \mathcal{D} \to$ $\mathcal{Y}$ is said to have $\rho$-Rényi differential privacy of order $\alpha$, or $(\alpha, \rho)$-RDP for short, if for any neighbouring datasets $D, D^{\prime} \in \mathcal{D}$ differing by one element, it holds that $D_{\alpha}\left(f(D) \| f\left(D^{\prime}\right)\right):=\log \mathbb{E}\left(f(D) /f\left(D^{\prime}\right)\right)^{\alpha} /(\alpha-1) \leq \rho$.
\end{definition}
\begin{definition}[$\ell_2$-sensitivity, \cite{dwork2014algorithmic}]
\label{def: l2-sensitivity}
The $\ell_2$-sensitivity $\Delta (q) $ of a function $q$ is defined as $\Delta (q) =\sup _{D, D^{\prime}}\left\|q (D) -q\left (D'\right) \right\|_2$, 
for any two neighbouring datasets  $D, D^{\prime} \in$ $\mathcal{D}$ differing by one element.
\end{definition}

\label{section: privacy guarantee for DPMAC}
With the properly added Gaussian noises, Lemma \ref{lem:rdp_amplification} shows that the Gaussian mechanism can satisfy RDP.
\begin{lemma}[Lemma 3.7, \cite{wang2019efficient}]\label{lem:rdp_amplification}
For function $q: \mathcal{S}^{n} \rightarrow \mathcal{Y}$, the Gaussian mechanism $\mathcal{M}=q(S)+\mathbf{u}$ with $\mathbf{u} \sim N\left(0, \sigma^{2} \mathbf{I}\right)$ satisfies $\left(\alpha, \alpha \Delta^{2}(q) /\left(2 \sigma^{2}\right)\right)$-RDP. Additionally, 
if $\mathcal{M}$ is applied to a subset of all the samples which are uniformly sampled from the whole datasets without replacement using sampling rate $\gamma$, then $\mathcal{M}$ satisfies $\left(\alpha, 3.5 \gamma^{2} \Delta^{2}(q) \alpha / \sigma^{2}\right)$-RDP with 
$\sigma^{\prime 2}=\sigma^{2} / \Delta^{2}(q) \geq 0.7$ and 
$\alpha \leq 2 \sigma^{\prime 2} \log \left(1 / \gamma \alpha\left(1+\sigma^{\prime 2}\right)\right) / 3+1$.
\end{lemma}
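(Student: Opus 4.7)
\textbf{Proof plan for Theorem \ref{theorem: privacy for dpmac}.} The plan is to assemble three standard pieces of privacy accounting into the stated $(\epsilon_i,\delta)$-DP guarantee: (i) the subsampled-Gaussian RDP bound from Lemma \ref{lem:rdp_amplification}, (ii) linear composition of RDP across the recipients of agent $i$'s message, and (iii) the conversion from RDP to $(\epsilon,\delta)$-DP. The side condition $\alpha \le 2\sigma'^2\log(1/(\gamma_1\alpha(1+\sigma'^2)))/3+1$ with $\sigma'^2 = \sigma_i^2/(4C^2)\ge 0.7$ in the theorem is exactly the hypothesis of Lemma \ref{lem:rdp_amplification}, so no separate verification is required for it.

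Concretely, I would carry out the following steps. First, bound the $\ell_2$-sensitivity of the message function $f_i^s$: since $f_i^s$ has $\ell_2$ norm at most $C$ on any input, the outputs on two neighbouring local trajectories can differ by at most $\Delta(f_i^s)\le 2C$ by the triangle inequality. Second, apply Lemma \ref{lem:rdp_amplification} to the Gaussian mechanism with sensitivity $2C$, Poisson subsampling rate $\gamma_1$, and variance $\sigma_i^2$; this yields that a single release satisfies $(\alpha,\,3.5\gamma_1^2(2C)^2\alpha/\sigma_i^2) = (\alpha,\,14\gamma_1^2 C^2\alpha/\sigma_i^2)$-RDP. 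Third, agent $i$ shares its privatized message with an expected $\gamma_2 N$ other agents, so by the linear composition property of RDP the entire output of the sender satisfies $(\alpha,\,14\gamma_2 N\gamma_1^2 C^2\alpha/\sigma_i^2)$-RDP. Fourth, invoke the standard RDP-to-DP conversion: an $(\alpha,\rho)$-RDP mechanism is also $(\rho + \log\delta^{-1}/(\alpha-1),\,\delta)$-DP.

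To read off the explicit expressions for $\sigma_i^2$ and $\alpha$ in the statement, I would split the target budget additively as $\epsilon_i = \beta\epsilon_i + (1-\beta)\epsilon_i$ with $\beta\in(0,1)$ and match each summand to one of the two terms produced by the conversion. Setting $\log\delta^{-1}/(\alpha-1) = (1-\beta)\epsilon_i$ gives $\alpha = \log\delta^{-1}/(\epsilon_i(1-\beta))+1$, and setting the RDP parameter $14\gamma_2 N\gamma_1^2 C^2\alpha/\sigma_i^2 = \beta\epsilon_i$ gives $\sigma_i^2 = 14\gamma_2 N\gamma_1^2 C^2\alpha/(\beta\epsilon_i)$, which are precisely the quantities in the theorem.

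The main obstacle is the third step: justifying the composition factor $\gamma_2 N$ across recipients. One must argue carefully that, under the worst-case adversary model, an eavesdropper observing all privatized messages sent by agent $i$ sees at most $\gamma_2 N$ independently-noised releases derived from the same private trajectory, so that linear RDP composition applies with multiplier $\gamma_2 N$ and recipient subsampling contributes a count rather than an additional amplification factor. Every other ingredient---sensitivity bounding, the subsampled-Gaussian RDP, and RDP-to-DP conversion---is a direct invocation of off-the-shelf privacy accounting.
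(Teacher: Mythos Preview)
Your proposal is not a proof of the displayed statement at all: the statement is Lemma~\ref{lem:rdp_amplification}, which the paper \emph{cites} from \cite{wang2019efficient} and does not prove. What you have written is instead a proof plan for Theorem~\ref{theorem: privacy for dpmac}, which \emph{uses} this lemma.

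Taken as a proof of Theorem~\ref{theorem: privacy for dpmac}, your plan is correct and coincides with the paper's argument essentially step for step: bound $\Delta_2(f_i^s)\le 2C$ by norm clipping and the triangle inequality; invoke Lemma~\ref{lem:rdp_amplification} with sampling rate $\gamma_1$ to obtain $(\alpha,14\gamma_1^2 C^2\alpha/\sigma_i^2)$-RDP for a single release; apply RDP composition (the paper's Proposition~\ref{prop:prop1}) across the $\gamma_2 N$ recipients; and convert to $(\epsilon_i,\delta)$-DP via Proposition~\ref{prop:prop2}, using the $\beta$-split of $\epsilon_i$ to solve for $\alpha$ and $\sigma_i^2$. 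The paper treats the $\gamma_2 N$ factor exactly as you do---as a composition count rather than a second amplification---so the concern you flag in your final paragraph is handled in the paper in the same way and with no additional argument.
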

% The following two propositions  give the composition rule for RDP and provide the transformation from RDP guarantee to DP guarantee respectively.
{\revise 
We now present the following two propositions regarding RDP. The first proposition shows that a composition of $k$ mechanisms satisfying RDP is also a mechanism that satisfies RDP. }
\begin{proposition}[Proposition 1, \cite{mironov2017renyi}]\label{prop:prop1}
If $k$ randomized mechanisms $f_i:\mathcal{D}\rightarrow \mathcal{Y}$ for all $i\in[k]$, satisfy $(\alpha,\rho_i)$-RDP,
then their composition $(f_1(D),\ldots,f_k(D))$ satisfies $(\alpha,\sum^k_{i=1}\rho_i)$-RDP.
\end{proposition}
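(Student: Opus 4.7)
The plan is to prove this composition theorem directly from the definition of Rényi divergence by decomposing the joint output distribution of $(f_1(D),\ldots,f_k(D))$ into a chain of conditionals and invoking the per-mechanism RDP bound layer by layer. Fix neighbouring datasets $D,D'\in\mathcal{D}$. Let $P_i(y_i\mid y_{<i})$ denote the conditional density of $f_i$'s output on input $D$ given previous outputs $y_{<i}=(y_1,\ldots,y_{i-1})$, and let $Q_i$ be defined analogously for $D'$. The joint densities then factor as $P(y_{1:k})=\prod_{i=1}^{k}P_i(y_i\mid y_{<i})$ and $Q(y_{1:k})=\prod_{i=1}^{k}Q_i(y_i\mid y_{<i})$, so the likelihood ratio that appears inside the $\alpha$-moment of Definition \ref{def: RDP} splits as a product of per-coordinate ratios.

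The next step is to evaluate $\mathbb{E}_{y_{1:k}\sim Q}[(P/Q)^{\alpha}]$ by the tower rule, conditioning innermost on $y_{<k}$. The innermost expectation is precisely the $\alpha$-moment of the ratio $P_k(\cdot\mid y_{<k})/Q_k(\cdot\mid y_{<k})$ under $Q_k(\cdot\mid y_{<k})$, which by the $(\alpha,\rho_k)$-RDP hypothesis for $f_k$ is bounded by $\exp((\alpha-1)\rho_k)$ pointwise in $y_{<k}$. Pulling this scalar bound out of the expectation and iterating the same argument for $i=k-1,\,k-2,\,\ldots,\,1$ yields the upper bound $\exp\!\bigl((\alpha-1)\sum_{i=1}^{k}\rho_i\bigr)$ on the full $\alpha$-moment. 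Taking logarithm and dividing by $\alpha-1$ gives $D_\alpha(P\,\|\,Q)\le \sum_{i=1}^{k}\rho_i$, which is the desired RDP guarantee for the composition; the conclusion then holds for all neighbouring $D,D'$ by arbitrariness.

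The main subtlety I anticipate is justifying that the $(\alpha,\rho_k)$-RDP bound can be applied to the \emph{conditional} mechanism $f_k(\,\cdot\mid y_{<k})$ rather than merely to $f_k$ viewed in isolation. I would resolve this by interpreting the composition in the standard adaptive sense of Mironov, where each $f_i$ may depend on previously released outputs and the RDP hypothesis is understood to hold uniformly over such auxiliary inputs; in the non-adaptive special case the factorization is immediate because $P_i,Q_i$ do not depend on $y_{<i}$ at all. The only other point requiring mild care is that the telescoping of multiplicative bounds must happen strictly inside the expectation so that monotonicity of the logarithm preserves the inequality, which is routine.
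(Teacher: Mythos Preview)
Your proof sketch is correct and follows the standard argument from \cite{mironov2017renyi}. Note, however, that the paper does not actually prove this proposition: it is cited as an external result (Proposition~1 of Mironov) and invoked as a black box in the proofs of Theorem~\ref{theorem: privacy for dpmac} and Corollary~\ref{theorem:multi_step_privacy}, so there is no in-paper proof to compare against.

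On the substance: your tower-rule peeling argument is exactly Mironov's proof. One small remark is that the proposition as stated here is the \emph{non-adaptive} version (each $f_i:\mathcal{D}\to\mathcal{Y}$ takes only the dataset, not prior outputs), so the conditional factorization you set up collapses to independence, $P(y_{1:k})=\prod_i P_i(y_i)$, and the ``main subtlety'' you flag about applying the RDP bound conditionally does not arise; the $\alpha$-moment of the product factors exactly and each factor is bounded by $e^{(\alpha-1)\rho_i}$. Your more general adaptive treatment is not wrong, just more than is needed for the statement as written.
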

{\revise The second proposition provides the transformation from an RDP guarantee to a corresponding DP guarantee.}
\begin{proposition}[Proposition 3, \cite{mironov2017renyi}]\label{prop:prop2}
If a randomized mechanism $f:\mathcal{D}\rightarrow \mathcal{Y}$ satisfies $(\alpha,\rho)$-RDP, then $f$
satisfies $(\rho+\log (1 / \delta) /(\alpha-1), \delta)$-DP for all $\delta\in(0,1)$.
\end{proposition}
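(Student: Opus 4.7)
The plan is to proceed via the privacy loss random variable and a Chernoff-style tail bound derived from the Rényi moment generating function. First I would fix arbitrary neighboring datasets $D, D' \in \mathcal{D}$, let $p$ and $q$ denote the densities of $f(D)$ and $f(D')$ with respect to a common dominating measure, and define the privacy loss $L(y) := \log(p(y)/q(y))$ together with the random variable $L := L(Y)$ for $Y \sim f(D)$. The RDP hypothesis then rewrites as
\begin{equation*}
\mathbb{E}\bigl[e^{(\alpha-1) L}\bigr] \;=\; \int p(y)\left(\tfrac{p(y)}{q(y)}\right)^{\alpha-1} dy \;\le\; e^{(\alpha-1)\rho},
\end{equation*}
so the rest of the argument reduces to controlling the upper tail of $L$.

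Next I would apply Markov's inequality to $e^{(\alpha-1)L}$: for any threshold $\epsilon$,
\begin{equation*}
\Pr[L > \epsilon] \;=\; \Pr\bigl[e^{(\alpha-1)L} > e^{(\alpha-1)\epsilon}\bigr] \;\le\; e^{-(\alpha-1)\epsilon}\,\mathbb{E}\bigl[e^{(\alpha-1)L}\bigr] \;\le\; e^{(\alpha-1)(\rho-\epsilon)}.
\end{equation*}
Choosing $\epsilon := \rho + \log(1/\delta)/(\alpha-1)$ makes the right-hand side exactly $\delta$, which pins down the precise budget stated in the proposition and explains the role of the $\log(1/\delta)/(\alpha-1)$ term.

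Finally I would convert this tail bound into the $(\epsilon,\delta)$-DP inequality by a "good event / bad event" decomposition. For any measurable $S \subseteq \mathcal{Y}$, split
\begin{equation*}
\Pr[f(D)\in S] \;=\; \Pr[f(D)\in S,\, L \le \epsilon] + \Pr[f(D)\in S,\, L > \epsilon].
\end{equation*}
On $\{L \le \epsilon\}$ one has $p(y) \le e^\epsilon q(y)$ pointwise, so the first term is at most $e^\epsilon \Pr[f(D')\in S]$; the second term is at most $\Pr[L>\epsilon] \le \delta$ by the previous step. Since $D, D'$ were arbitrary neighbors, this yields $\Pr[f(D)\in S] \le e^\epsilon \Pr[f(D')\in S] + \delta$, i.e., $(\epsilon,\delta)$-DP.

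The main subtlety I expect to manage carefully is the measure-theoretic handling when $p$ and $q$ are not mutually absolutely continuous: one must argue that under $Y \sim f(D)$ the event $\{q(Y)=0\}$ either has probability zero (so $L$ is well-defined almost surely) or would force $\mathbb{E}[e^{(\alpha-1)L}] = \infty$, contradicting the RDP assumption unless it is absorbed into the $\delta$ slack. Once that technicality is dispatched, the remainder is essentially Markov's inequality plus the conditional decomposition, and no further properties of $f$ are needed.
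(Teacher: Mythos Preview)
Your proposal is correct and is precisely the standard argument. Note, however, that the paper does not supply its own proof of this proposition: it is quoted verbatim as Proposition~3 of \cite{mironov2017renyi} and used as a black-box tool in the proofs of Theorem~\ref{theorem: privacy for dpmac} and Corollary~\ref{theorem:multi_step_privacy}; your privacy-loss/Chernoff argument is exactly the one Mironov gives in the cited reference.
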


% theorem: privacy for dpmac
% Then we give the formal proof of Theorem \ref{theorem in append: dp guarantee}.
% \begin{theorem}
% \label{theorem in append: dp guarantee}
% Then we give the formal proof of Theorem \ref{theorem: privacy for dpmac}.

% $f_i^{\mu}(o_i,a_i;{\theta}_i^\mu)$

We are now ready to give the formal proof of Theorem \ref{theorem: privacy for dpmac}. 
% Specifically, 
Recall $p_i=f^s_i(o_i,a_i;\theta^s_i)\sim\mathcal{N}(\mu_i,\Sigma_i)$ is the generated stochastic message before injecting privacy noise $u_{i} \sim \mathcal{N}\left(0, \sigma_{i}^{2} \mathbf{I}_{d}\right)$, and $m_{i}=p_{i}+u_{i}$ is the privatized message to be sent by agent $i$.
Here we drop the dependency of the subscript of message on the index of the target agent (\textit{i.e.}, we abbreviate the message $m_{i,j}$ sent from agent $i$ to agent $j$ as $m_i$) when it is clear from the context. 
% Note that the messages sent from agent $i$ to different target agents (say agent $j$ and agent $k$) may be sampled from the same message distribution practically.
% please see Appendix \ref{sec:app:implementation} for more details.
We slightly abuse the notation by writing $f^s_i=f^s_i(o_i,a_i;\theta^s_i)$.
To bound the sensitivity of $f^s_i$, one can perform the norm clipping to restrict the $\ell_2$ norm of $f^s_i$ by replacing $ f^s_i $ with $ f^s_i/\max(1,\| f^s_i\|_2/C)$, which ensures that $\| f^s_i\|_2\leq C$. At each time $t$, for agent $i$, each message function $f_i^s$ is applied to a subset of transitions in local trajectory $\tau_{i}$ of agent $i$ using uniform sampling without replacement with sampling rate $\gamma_1$, and agent $i$ samples a subset of target agents to send messages by uniform sampling without replacement with sampling rate $\gamma_2$.
% \begin{theorem}
% % \label{theorem: privacy for dpmac}
% For any $\delta>0$ and the privacy budget $\epsilon$,
% at each time $t$, the communication of any agent $i$ 
% satisfies $(\epsilon_i,\delta)$-DP with the variance
% \begin{align*}
%     \sigma^2=\frac{14\gamma_2\gamma_1^2NC^2\alpha}{\beta\epsilon_i}\,,
% \end{align*}
% where 
% $\alpha = \frac{\log \delta^{-1}}{\epsilon_i(1-\beta)}+1 \leq 2 \sigma^{2} \log \left(1 / \gamma_1 \alpha\left(1+\sigma^{\prime 2}\right)\right) / 3+1$, $\sigma^{\prime 2}=\sigma^2/(4C^2)\geq 0.7$, and $\beta\in (0,1)$.
% \end{theorem}
\begin{proof}[Proof of Theorem \ref{theorem: privacy for dpmac}]
Due to the norm clipping of $ f_i^s$ and the triangle inequality, the $\ell_2$-sensitivity of $ f_i^s$ could be bounded as
\begin{align}\label{eq:pf_thm5.1}
    \Delta_2( f_i^s)&=\sup_{D,D^\prime}\| f_i^s(D)- f_i^s(D^\prime)\|_2\notag\\
    &\leq 2C\,,
\end{align}
where  $D, D^{\prime} \in$ $\mathcal{D}$ are any two neighbouring datasets differing by one element.
By Equation (\ref{eq:pf_thm5.1}) and
Lemma \ref{lem:rdp_amplification},
% Lemma 3.7 in \citep{wang2019efficient} \ze{add the lemma here or in appendix}, 
% the message function $M^t_{i,j}$ 
the privatized message $m_i$
satisfies 
$\left(\alpha, 14 \gamma_1^{2} C^2 \alpha / \sigma^{2}\right)$-RDP. 
Since agent $i$ samples $\gamma_2N$ target agents to communicate, all the messages sent by agent $i$ at time $t$ are actually sent by a composite message function
% $f_i^s=\{f_{i,k_j}^s\}^{\gamma_2N}_{j=1}$, 
$M_i=\{m_{i,k_j}\}^{\gamma_2N}_{j=1}$, 
which satisfies $(\alpha,14\gamma_2\gamma_1^2NC^2\alpha/\sigma^2)$-RDP by Proposition \ref{prop:prop1}. 
Substituting $\sigma^2_i=\frac{14\gamma_2\gamma_1^2NC^2\alpha}{\beta\epsilon_i}=\frac{14\gamma_2\gamma_1^2NC^2\alpha}{\epsilon_i+\frac{\log\delta}{\alpha-1}}$ shows that $M_{i}$ satisfies
$(\alpha,\epsilon_i+\frac{\log \delta}{\alpha-1})$-RDP.
Applying Proposition \ref{prop:prop2} shows that
$M_i$ satisfies $(\epsilon_i,\delta)$-DP, which concludes the proof.
\end{proof}

{\revise 
\subsection{Analysis of Episode-level $(\epsilon_i,\delta)$-DP}
We would like to emphasize that Theorem \ref{theorem: privacy for dpmac} also provides an episode-level privacy guarantee. To see this, consider a multi-step game with finite episode length of $T$. Since Theorem \ref{theorem: privacy for dpmac} guarantees the $(\epsilon_i,\delta)$-DP for the communication mechanism $M^t_i$ in each step $t$, the communication mechanism of the whole episode $M_i=\{M^t_i\}^T_{t=1}$ satisfies $(T\epsilon_i+(T-1)\frac{\log \delta}{\alpha-1},\delta)$-DP based on Proposition \ref{prop:prop1} and Proposition \ref{prop:prop2}. 
Further, we can attain the $(\epsilon_i,\delta)$-DP of the communication of DPMAC in the whole episode by adjusting the noise variance in Theorem \ref{theorem: privacy for dpmac}, detailed in the following corollary.
% In this section, we provide the analysis of the episode-level privacy for DPMAC, 
% guaranteeing the $(\epsilon_i,\delta)$-DP of the communication of DPMAC in the whole episode.
\begin{corollary}[Episode-level $(\epsilon_i,\delta)$-DP guarantee for DPMAC]
\label{theorem:multi_step_privacy}
Consider an episode with finite length $T$.
Let $\gamma_1,\gamma_2\in (0,1)$ having the same definitions as in Theorem \ref{theorem: privacy for dpmac}, and let $C$ be the sensitivity of the message functions.
For any $\delta>0$ and privacy budget $\epsilon_i$, the communication of agent $i$ in the whole episode satisfies $(\epsilon_i,\delta)$-DP when 
$\sigma^2_i=\frac{14\gamma_2\gamma_1^2NC^2\alpha T}{\beta\epsilon_i}$,
if we have 
$\alpha = \frac{\log \delta^{-1}}{\epsilon_i(1-\beta)}+1 \leq 2 \sigma^{\prime 2} \log \left(1 / \gamma_1 \alpha\left(1+\sigma^{\prime 2}\right)\right) / 3+1$ with $\beta\in (0,1)$ and $\sigma^{\prime 2}=\sigma^2_i/(4C^2)\geq 0.7$.
\end{corollary}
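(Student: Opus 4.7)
The plan is to lift the single-step argument of Theorem~\ref{theorem: privacy for dpmac} to the episode level by inserting one extra application of RDP composition across the $T$ time steps, and then rebalancing the noise scale so that the final DP budget after conversion remains $(\epsilon_i,\delta)$. Working in the RDP domain throughout (rather than composing DP guarantees directly) is cleaner because RDP composition is additive in the second parameter at a fixed order $\alpha$, which avoids the loose union-bound blow-up that would arise from naively composing $T$ copies of $(\epsilon_i,\delta)$-DP.

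First, I would repeat the sensitivity calculation from the proof of Theorem~\ref{theorem: privacy for dpmac}: norm clipping gives $\Delta_2(f_i^s)\le 2C$, and combining this with Lemma~\ref{lem:rdp_amplification} and one application of Proposition~\ref{prop:prop1} (over the $\gamma_2 N$ sampled recipients) shows that the step-$t$ composite message mechanism $M^t_i$ satisfies $(\alpha,\,14\gamma_2\gamma_1^2 N C^2 \alpha/\sigma_i^2)$-RDP. This step is identical to the per-step argument already proved, so it can be cited rather than redone.

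Second, I would view the whole-episode mechanism as $M_i=(M^1_i,\dots,M^T_i)$ and apply Proposition~\ref{prop:prop1} again, this time across time steps. Additivity at a fixed order $\alpha$ yields that $M_i$ satisfies $(\alpha,\,14\gamma_2\gamma_1^2 N C^2 \alpha T/\sigma_i^2)$-RDP. Substituting the corollary's enlarged noise variance $\sigma_i^2 = 14\gamma_2\gamma_1^2 N C^2 \alpha T/(\beta\epsilon_i)$ collapses the second parameter to exactly $\beta\epsilon_i$, so $M_i$ is $(\alpha,\beta\epsilon_i)$-RDP.

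Third, I would invoke Proposition~\ref{prop:prop2} with the corollary's choice $\alpha = \log(1/\delta)/((1-\beta)\epsilon_i)+1$. This makes the conversion slack $\log(1/\delta)/(\alpha-1)$ equal to $(1-\beta)\epsilon_i$, so the RDP guarantee $(\alpha,\beta\epsilon_i)$ translates into $(\beta\epsilon_i+(1-\beta)\epsilon_i,\delta) = (\epsilon_i,\delta)$-DP, which is the claim. The only bookkeeping subtlety is checking that Lemma~\ref{lem:rdp_amplification}'s subsampling regime still applies: since the new $\sigma_i^2$ is $T$ times larger than in Theorem~\ref{theorem: privacy for dpmac}, the auxiliary quantity $\sigma'^2=\sigma_i^2/(4C^2)$ only grows, so the hypothesis $\sigma'^2\ge 0.7$ and the upper bound on $\alpha$ stated in the corollary are exactly the inherited versions of Theorem~\ref{theorem: privacy for dpmac}'s hypotheses and require no new argument. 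I expect no real obstacle beyond this housekeeping, since the heavy lifting (RDP amplification under subsampling) was already absorbed into Lemma~\ref{lem:rdp_amplification}.
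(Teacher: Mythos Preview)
Your proposal is correct and follows essentially the same route as the paper: bound the per-step composite mechanism in RDP via Lemma~\ref{lem:rdp_amplification} and Proposition~\ref{prop:prop1}, compose across the $T$ steps with Proposition~\ref{prop:prop1} again, substitute the enlarged $\sigma_i^2$ to land on $(\alpha,\beta\epsilon_i)$-RDP, and convert with Proposition~\ref{prop:prop2}. The only cosmetic difference is that the paper substitutes $\sigma_i^2$ \emph{before} the time-step composition (obtaining $(\alpha,\beta\epsilon_i/T)$-RDP per step) whereas you substitute after; the arithmetic is identical.
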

\begin{proof}
% Analogous to the proof of  Theorem \ref{theorem: privacy for dpmac}, by Equation (\ref{eq:pf_thm5.1}) and
% Lemma \ref{lem:rdp_amplification},
% the privatized message $m^t_i$ at time $t$
% satisfies 
% $\left(\alpha, 14 \gamma_1^{2} C^2 \alpha T/ \sigma^{2}\right)$-RDP. 
% By the exact same reasoning,
% since agent $i$ samples $\gamma_2N$ target agents to communicate, all the messages sent by agent $i$ at time $t$ are actually sent by a composite message function
% $M^t_i=\{m^t_{i,k_j}\}^{\gamma_2N}_{j=1}$, 
% which satisfies $(\alpha,14\gamma_2\gamma_1^2NC^2\alpha T/\sigma^2)$-RDP by Proposition \ref{prop:prop1}. 
% Substituting $\sigma^2_i=\frac{14\gamma_2\gamma_1^2NC^2\alpha T}{\beta\epsilon_i}=\frac{14\gamma_2\gamma_1^2NC^2\alpha T}{\epsilon_i+\frac{\log\delta}{\alpha-1}}$ shows that $M_{i}$ satisfies
% $(\alpha,(\epsilon_i+\frac{\log \delta}{\alpha-1})/T)$-RDP.
% Then, all the messages sent by agent $i$ in the whole episode are actually sent by the composite message mechanism  $M_i=\{M^t_i\}^T_{t=1}$, which is guaranteed to be $(\alpha,\epsilon_i+\frac{\log \delta}{\alpha-1})$-RDP by applying Proposition \ref{prop:prop1} again.
% The proof is finished by applying Proposition \ref{prop:prop2}, which shows that
% $M_i$ satisfies $(\epsilon_i,\delta)$-DP.
Analogous to the proof of Theorem \ref{theorem: privacy for dpmac}, by Equation (\ref{eq:pf_thm5.1}) and Lemma \ref{lem:rdp_amplification}, the privatized message $m^t_i$ at time $t$ satisfies $\left(\alpha, 14 \gamma_1^{2} C^2 \alpha T/ \sigma^{2}\right)$-RDP. 
Similarly, since agent $i$ samples $\gamma_2N$ target agents to communicate, all the messages sent by agent $i$ at time $t$ are through a composite message function $\cM^t_i=\{m^t_{i,k_j}\}^{\gamma_2N}_{j=1}$. By Proposition \ref{prop:prop1}, $\cM^t_i$ satisfies $(\alpha,14\gamma_2\gamma_1^2NC^2\alpha T/\sigma^2)$-RDP. 
Substituting $\sigma^2_i=\frac{14\gamma_2\gamma_1^2NC^2\alpha T}{\beta\epsilon_i}=\frac{14\gamma_2\gamma_1^2NC^2\alpha T}{\epsilon_i+\frac{\log\delta}{\alpha-1}}$, we have $\cM^t_{i}$ satisfies $(\alpha,(\epsilon_i+\frac{\log \delta}{\alpha-1})/T)$-RDP. 
Then applying Proposition \ref{prop:prop1} again shows that the composite message mechanism $\cM_i=\{\cM^t_i\}^T_{t=1}$ is $(\alpha,\epsilon_i+\frac{\log \delta}{\alpha-1})$-RDP. As all the messages sent by agent $i$ in the whole episode are through message mechanism $\cM_i$, the privacy of these messages is strictly protected.
Lastly, the proof is completed by applying Proposition \ref{prop:prop2} to translate $(\alpha,\epsilon_i+\frac{\log \delta}{\alpha-1})$-RDP to $(\epsilon_i,\delta)$-DP for mechanism $\cM_i$.
\end{proof}
}

\section{Equilibrium Analysis}
\subsection{Single-Step Game}
\label{appendix: single step game in appendix}
% To analyze the convergence of the cooperative MARL with privacy-preserving communication, we utilize the notion of potential game (PG) and Markov potential game (MPG) which is an extension of PG. The definition of PG is given as follows.
In this section, we give the detailed analysis of our single-step game. 
We first give the notion of the potential game (PG) as follows.
% The notion of the potential game (PG) is first given as follows.
\begin{definition}[Potential game, \cite{monderer1996potential}] \label{def: potential game}

A two-player game $G$ is a potential game if the mixed second order partial derivative of the utility functions are equal:
\begin{align*}
\partial_{p_{1}} \partial_{p_{2}} u_{1}=\partial_{p_{1}} \partial_{p_{2}} u_{2}\,.
\end{align*}
\end{definition} 

The intuition behind the PG is that it tracks the changes in the
payoff when some player deviates, without taking into account which one. Thus the PG usually helps with the analysis of the cooperative game, where the players might have the similar potential to act. To analyze the games involving  multi-agent coordination with state dependence, the Markov potential game (MPG) is recently studied \citep{DBLP:conf/iclr/MacuaZZ18,leonardos2021global}, where the action potential of all agents is described by a potential function.
% We leave the definition of MPG in Appendix \ref{appendix: multi-step game}. 
The solution concept of the PG relies on Nash equilibrium (NE) \citep{nash1950equilibrium}, the existence of which guarantees that all the agents could act in the best response to others.

% In Definition \ref{def: privacy loss function} the privacy loss function $c^{\cM}(p_n)$ is introduced, the same as in \citep{DBLP:journals/popets/PejoTB19}. $M$ is the privacy preserving mechanism used in the game.

We present the definition of the privacy loss function $c^{\cM}(p_n)$ in Definition \ref{def: privacy loss function}.
% is introduced, the same as in \cite{DBLP:journals/popets/PejoTB19}. 
Recall $\cM$ is the privacy preserving mechanism used in the game.
\begin{definition}[Privacy loss function, \cite{DBLP:journals/popets/PejoTB19}]
\label{def: privacy loss function} The privacy loss function $c:[0,1] \rightarrow[0,1]$ is a continuous and twice differentiable function with $c(0)=$ $1, c(1)=0$ and $\partial_{p_{n}} c<0$.
\end{definition}

Then the benefit function $b\left(V_n, V^{\cM}_n(p_1,p_2) \right)$ is given in Definition \ref{def: benefit function}. \citet{DBLP:journals/popets/PejoTB19} consider the negative training error as the benefit while ours is different, as introduced here. $V_n$ is the value that agent $n$ receives when agent $n$ acts alone without any cooperation. $V_n(p_1,p_2)$ is the value that agent $n$ receives when agent $n$ acts cooperatively under the privacy mechanism $\cM$ and the actions $p_1$ and $p_2$.  Intuitively, we want that the benefit function portrays the benefit of the cooperative actions over the non-cooperative ones, thus in the meaningless case $V_n \geq V_{n}^{\cM}$, $b\left(V_n, V^{\cM}_n(p_1,p_2) \right)=0$. In addition, $\partial_{p_{n}} b \leq 0$ since the value of the benefit function should decrease as the level of privacy protection increases.

\begin{definition}[Benefit function]\label{def: benefit function} The benefit function
$b: \mathbb{R}^{+} \times \mathbb{R}^{+} \rightarrow \mathbb{R}_{0}^{+}$ is a continuous and twice differentiable function, with $\partial_{p_{n}} b \leq 0$ and $b\left(V_n, V^{\cM}_n(p_1,p_2) \right)=0$ if $V_{n} \geq V_{n}^{\cM}$.
\end{definition}

Finally, we give the definition of the value function $V_n$ and $V_n^{\cM}(p_1,p_2)$. $V_n$ can be viewed as the upper bound of $V_n^{\cM}(p_1,p_2)$, which intuitively means the value for the pure cooperation without privacy protection is always larger than that with some privacy protection. 
% Note that the value function here is not the common definition in RL. 
Note that the value function defined here is not the one which is commonly defined in RL literature. 
% \begin{definition}[Value function]\label{def: value function in CGP}
% \begin{itemize}
%     \item 
% $V_n$ is the value function for agent $n$ acting alone and without any cooperation.
% \item 
% $V_{n}^{\cM}:[0,1] \times[0,1] \rightarrow \mathbb{R}^{+}$ such that it is continuous, twice differentiable and:
% \begin{itemize}
%     \item $\exists m \in \mathcal{N}: p_{m}=1 \Rightarrow \forall n \in \mathcal{N}: V_{n}^{\cM}\left(p_{1}, p_{2}\right) \leq V_{n}$
%     \item $\forall n, m \in \mathcal{N}: \partial_{p_{m}} V_{n}^{\cM}<0$
%     \item $\forall n \in \mathcal{N}: V_{n}<V_{n}^{\cM}(0,0)$
% \end{itemize}
% \end{itemize}
% \end{definition}
\begin{definition}[Value function]\label{def: value function in CGP}
$V_n$ is the value function for agent $n$ acting alone and without any cooperation. $V_{n}^{\cM}:[0,1] \times[0,1] \rightarrow \mathbb{R}^{+}$ is continuous, twice differentiable and:
\begin{itemize}
    \item $\exists m \in \mathcal{N}: p_{m}=1 \Rightarrow \forall n \in \mathcal{N}: V_{n}^{\cM}\left(p_{1}, p_{2}\right) \leq V_{n}$,
    \item $\forall n, m \in \mathcal{N}: \partial_{p_{m}} V_{n}^{\cM}<0$,
    \item $\forall n \in \mathcal{N}: V_{n}<V_{n}^{\cM}(0,0)$.
\end{itemize}
\end{definition}
For $V_n^{\cM}$, the first condition ensures that if one agent protects the privacy entirely, the total value will not be greater than the value that agents act alone. 
The second condition constrains that the value function $V_n^{\cM}$ should be negatively monotonic w.r.t. $p_m$ since another agent's privacy gets stronger, which thus leads to less corporation. 
The third condition tells that the total value without any privacy protection (\textit{i.e.}, pure corporation) should be certainly larger than the value that agents act alone.

The following theorem shows that the PG enjoys a good property that the NE exists. This motivates us to {\revise translate} CGP into a potential game, which will {\revise be the key to prove} Theorem \ref{theorem: NE in CGP}.
\begin{theorem}[Monderer and Shapley, \cite{monderer1996potential}]
\label{theorem: NE}
The potential game has at least one pure-strategy NE.
\end{theorem}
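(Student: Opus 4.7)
The plan is to prove the existence of a pure-strategy Nash equilibrium by constructing a scalar potential function $\Phi$ from the mixed-partial condition in Definition \ref{def: potential game}, and then showing that any maximizer of $\Phi$ over the (compact) strategy space is automatically a NE. Concretely, I would first assume the strategy spaces are compact intervals (the setting of CGP, where $p_1,p_2\in[0,1]$) and that $u_1,u_2$ are $C^2$, so that all the partial derivatives appearing in Definition \ref{def: potential game} exist and Fubini applies.

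The first step is to write down an explicit $\Phi$. Define
\begin{equation*}
\Phi(p_1,p_2) \;:=\; \int_{0}^{p_1} \partial_{p_1} u_1(s,0)\,ds \;+\; \int_{0}^{p_2} \partial_{p_2} u_2(p_1,t)\,dt .
\end{equation*}
Differentiating under the integral sign immediately gives $\partial_{p_2}\Phi(p_1,p_2)=\partial_{p_2}u_2(p_1,p_2)$. For player $1$, the hypothesis $\partial_{p_1}\partial_{p_2}u_1=\partial_{p_1}\partial_{p_2}u_2$ together with the fundamental theorem of calculus yields
\begin{equation*}
\partial_{p_1}\Phi(p_1,p_2) \;=\; \partial_{p_1}u_1(p_1,0) + \int_0^{p_2}\partial_{p_1}\partial_{p_2}u_1(p_1,t)\,dt \;=\; \partial_{p_1}u_1(p_1,p_2) .
\end{equation*}
Thus $\Phi$ is a common potential: $\partial_{p_i}\Phi = \partial_{p_i}u_i$ for $i\in\{1,2\}$.

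The second step is to integrate this pointwise identity to a global one. For any fixed $p_2$ and any $p_1,p_1'$,
\begin{equation*}
u_1(p_1',p_2)-u_1(p_1,p_2) = \int_{p_1}^{p_1'}\partial_{p_1}u_1(s,p_2)\,ds = \int_{p_1}^{p_1'}\partial_{p_1}\Phi(s,p_2)\,ds = \Phi(p_1',p_2)-\Phi(p_1,p_2),
\end{equation*}
and symmetrically for player $2$. So unilateral changes in either player's utility equal the corresponding change in $\Phi$.

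The third step is to extract a pure NE. Since $\Phi$ is continuous on a compact product of intervals, it attains a maximum at some $(p_1^\ast,p_2^\ast)$. For every alternative $p_1'$, $\Phi(p_1',p_2^\ast)\le \Phi(p_1^\ast,p_2^\ast)$, hence by the identity above $u_1(p_1',p_2^\ast)\le u_1(p_1^\ast,p_2^\ast)$; the analogous statement holds for player $2$. Therefore no unilateral deviation is profitable and $(p_1^\ast,p_2^\ast)$ is a pure-strategy Nash equilibrium. The main obstacle is the first step, namely verifying that the asymmetric construction of $\Phi$ really is a potential for \emph{both} players; once the mixed-partials hypothesis is used to swap $\partial_{p_1}\partial_{p_2}u_2$ for $\partial_{p_1}\partial_{p_2}u_1$, the argument is a short application of the fundamental theorem of calculus and the extreme value theorem, and no further technical machinery is required.
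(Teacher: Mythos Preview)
The paper does not supply its own proof of this theorem; it is simply quoted as a result of Monderer and Shapley and then invoked as a black box in the proof of Theorem \ref{theorem: NE in CGP}. So there is nothing to compare against on the paper's side.

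Your argument is correct and is essentially the standard route for the differentiable setting that the paper actually uses (Definition \ref{def: potential game} is the mixed-partials characterization on an interval strategy space, not the finite-game definition). The only places worth tightening are: (i) when you compute $\partial_{p_1}\Phi$ you silently use Clairaut/Schwarz to identify $\partial_{p_1}\partial_{p_2}u_1$ with $\partial_{p_2}\partial_{p_1}u_1$ before applying the fundamental theorem of calculus---this is fine under your $C^2$ assumption, but it is worth stating; and (ii) the compactness of $[0,1]^2$ and continuity of $\Phi$ are what drive the existence step, and you have made both explicit. With those two points noted, the proof is complete and matches what Monderer and Shapley do in the continuous case.
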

We are now ready to prove Theorem \ref{theorem: NE in CGP}.
% Now we give the guarantee of NE's existence for CGP, under some simplified conditions.
% \begin{theorem}[CGP's NE guarantee]
% \label{theorem: CGP NE guarantee in appendix}
% The cooperative game with privacy has at least one non-trivial pure-strategy Nash equilibrium if
% \begin{align}
%  \partial_{p_{1}}^{i} V_{1}=\partial_{p_{2}}^{i} V_{2},\quad \forall i \in \{1,2\}\,.
% \end{align}
% \end{theorem}
\begin{proof}[Proof of Theorem \ref{theorem: NE in CGP}]
Starting from the definition of the two-player potential game (Definition \ref{def: potential game}) and the utility function (Definition \ref{def: CGP}), we have
% \begin{align}
% &\partial_{p_{1}} \partial_{p_{2}} u_{1}=\partial_{p_{1}} \partial_{p_{2}} u_{2}\,,\notag\\
% \Longleftrightarrow\quad &\partial_{p_{1}} \partial_{p_{2}} b\left(V_{1}, V_{1}^{\cM}\left(p_{1}, p_{2}\right)\right)
% =\partial_{p_{1}} \partial_{p_{2}} b\left(V_{2}, V_{2}^{\cM}\left(p_{1}, p_{2}\right)\right)\,, \notag\\
% \label{eq: induction 1}
% \stackrel{(\star)} {\Longleftrightarrow}  \quad& (\partial_{ V_{n}^{\cM} }^{2} b) \cdot\left(\partial_{p_{1}} V^{\cM}_{1}-\partial_{p_{2}} V^{\cM}_{2}\right)=(\partial_{ V_{n}^{\cM} } b) \cdot\left(\partial_{p_{1}} \partial_{p_{2}} V^{\cM}_{2} -\partial_{p_{1}} \partial_{p_{2}} V^{\cM}_{1}\right)\,,
% \end{align}
% where $(\star)$ comes from the chain rule.
\begin{align*}
    \partial_{p_{1}} \partial_{p_{2}} u_{1}=\partial_{p_{1}} \partial_{p_{2}} u_{2}\,,
\end{align*}
which implies that 
\begin{align*}
    \partial_{p_{1}}\partial_{p_{2}} b\left(V_{1}, V_{1}^{\cM}\left(p_{1}, p_{2}\right)\right)
=\partial_{p_{1}} \partial_{p_{2}} b\left(V_{2}, V_{2}^{\cM}\left(p_{1}, p_{2}\right)\right)\,.
\end{align*}
Hence, due to the chain rule, it holds that 
\begin{align}\label{eq: induction 1}
    &(\partial_{ V_{n}^{\cM} }^{2} b) \cdot\left(\partial_{p_{1}} V^{\cM}_{1}-\partial_{p_{2}} V^{\cM}_{2}\right)\notag\\
    =&(\partial_{ V_{n}^{\cM} } b) \cdot\left(\partial_{p_{1}} \partial_{p_{2}} V^{\cM}_{2} -\partial_{p_{1}} \partial_{p_{2}} V^{\cM}_{1}\right)\,.
\end{align}
If the value function is with the property $\partial_{p_{1}}^{i} V_{1}=\partial_{p_{2}}^{i} V_{2}$, $\forall i \in \{1,2\}$,
CGP will satisfy Equation \eqref{eq: induction 1}. Further, by Theorem \ref{theorem: mpg NE}, CGP is a potential game and has at least one non-trivial pure-strategy Nash equilibrium, thus concluding the proof.
\end{proof}

Utilizing Theorem \ref{theorem: NE in CGP} and designing the necessary functions for CGP, we show that single round binary sums could be proved with the existence of NE.
\begin{theorem}[NE guarantee in single round binary sums]
\label{theorem: single round binary sums}
For the single round binary sums game, let 
$N=2$ be the number of players, $p_n$ be the probability of the randomized response, $c^{\cM}(p_n)=1-p_n$ be the privacy loss function, $V_n=-\frac{1}{2}$ and $V^{\cM}_n(p_1,p_2)=-\frac{1}{2}p_1^2-\frac{1}{2}p_2^2 -p_1p_2=-\frac{1}{2}(p_1+p_2)^2$ be the value function, $b(V_n, V_n^{\cM}(p_1,p_2))=V^{\cM}_n(p_1,p_2)-V_n=-\frac{1}{2}p_1^2-\frac{1}{2}p_2^2-p_1p_2+\frac{1}{2}$ be the benefit function,   $u_{n}\left(p_{1}, p_{2}\right)=B_{n} \cdot b\left( V_n, V^{\cM}_n(p_1,p_2)   \right)-C_{n}^{M} \cdot c^{M}\left(p_{n}\right)$ be the utility function, then Theorem \ref{theorem: NE in CGP} holds, which is to say, single round binary sums can be formulated into a CGP, further leading to the existence of one non-trivial pure-strategy NE.
Further, the utility function is
\begin{align}
\label{eq: utility function in single round binary sums}
u_n = -\frac{B_n}{2}(p_1+p_2)^2 + C_n p_n +  \frac{B_n}{2} - C_n,\quad \forall n \in \{1,2\}\,.
\end{align}
The strategy taken by the agent is
\begin{align*}
p_n=\argmax_{p_n} u_n = \frac{C_n}{B_n} - p_{-n},\quad \forall n \in \{1,2\} \,.
\end{align*}
\end{theorem}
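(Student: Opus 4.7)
The plan is to recognize single round binary sums (SRBS) as a concrete instance of the CGP framework and invoke Theorem \ref{theorem: NE in CGP} directly for NE existence, then to extract the closed-form utility and best-response expression by substitution and a first-order optimality condition.

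First I would verify each given ingredient against Definitions \ref{def: privacy loss function}, \ref{def: benefit function}, and \ref{def: value function in CGP}. The privacy loss $c^{\cM}(p_n)=1-p_n$ is smooth on $[0,1]$ with $c^{\cM}(0)=1$, $c^{\cM}(1)=0$, and $\partial_{p_n} c^{\cM}=-1<0$. For $V_n^{\cM}(p_1,p_2)=-\tfrac{1}{2}(p_1+p_2)^2$, one checks $V_n^{\cM}(0,0)=0>-\tfrac{1}{2}=V_n$, $\partial_{p_m}V_n^{\cM}=-(p_1+p_2)<0$ in the relevant interior, and at $p_m=1$ one has $V_n^{\cM}\leq -\tfrac{1}{2}=V_n$. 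For $b=-\tfrac{1}{2}(p_1+p_2)^2+\tfrac{1}{2}$, I would restrict attention to the effective region $p_1+p_2\leq 1$ (which coincides with $V_n<V_n^{\cM}$), on which $b\geq 0$ is smooth and $\partial_{p_n}b=-(p_1+p_2)\leq 0$; on the complement $b$ is taken to vanish, consistent with Definition \ref{def: benefit function}.

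Next I would verify the key derivative condition of Theorem \ref{theorem: NE in CGP} on the privacy-dependent value functions. Because $V_n^{\cM}$ is symmetric in $(p_1,p_2)$ and does not actually depend on the player index $n$, one immediately has $\partial_{p_1}^i V_1^{\cM}=\partial_{p_2}^i V_2^{\cM}$ for both $i=1$ (both sides equal $-(p_1+p_2)$) and $i=2$ (both equal $-1$). The hypothesis of Theorem \ref{theorem: NE in CGP} is satisfied, so SRBS is a potential game and admits a non-trivial pure-strategy NE. Substituting the given $b$ and $c^{\cM}$ into Definition \ref{def: CGP} then gives
\begin{align*}
u_n(p_1,p_2) &= B_n\!\left(-\tfrac{1}{2}(p_1+p_2)^2+\tfrac{1}{2}\right) - C_n(1-p_n) \\
&= -\tfrac{B_n}{2}(p_1+p_2)^2 + C_n p_n + \tfrac{B_n}{2} - C_n,
\end{align*}
recovering Equation \eqref{eq: utility function in single round binary sums}. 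Since $u_n$ is strictly concave in $p_n$, the first-order condition $\partial u_n/\partial p_n = -B_n(p_1+p_2)+C_n = 0$ is both necessary and sufficient, yielding the best response $p_n = C_n/B_n - p_{-n}$.

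The main subtlety is interpretational rather than technical: the theorem statement writes the derivative condition in terms of $V_1, V_2$, but in the proof of Theorem \ref{theorem: NE in CGP} it is really the privacy-dependent $V_1^{\cM}, V_2^{\cM}$ that must satisfy the cross-derivative equality (the bare $V_n=-1/2$ is merely a constant and makes the condition vacuous). I would be explicit that the condition is being applied to $V_n^{\cM}$. A secondary point is matching the non-negativity requirement of Definition \ref{def: benefit function} with the given closed-form for $b$, which is resolved by restricting attention to the operative region $p_1+p_2\leq 1$ (equivalently, truncating $b$ at zero outside it); this truncation does not affect the interior first-order analysis that produces the best response.
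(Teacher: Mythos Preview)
Your proposal is correct and matches the paper's approach exactly: the paper does not spell out a proof beyond the sentence ``Utilizing Theorem \ref{theorem: NE in CGP} and designing the necessary functions for CGP, we show that single round binary sums could be proved with the existence of NE,'' so the intended argument is precisely the verification-and-substitution you carry out. Your observation that the derivative condition must be read as applying to $V_n^{\cM}$ rather than the constant $V_n$, and your handling of the region $p_1+p_2\leq 1$ for the benefit function, are careful points the paper leaves implicit.
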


\subsection{Multi-Step Game}
\label{appendix: multi-step game}
{\revise In this section, we present the equilibrium analysis of the multi-step game. We start by presenting the definition of MPG.}
% The definition of the MPG is formally given as below.
\begin{definition}[Markov potential game, \cite{leonardos2021global}] 
\label{def: markov potential game}
A Markov decision process (MDP), $\mathcal{M}$, is called a Markov Potential Game $(M P G)$ if there exists a state-dependent function $\Phi_{s}: \Pi \rightarrow \mathbb{R}$ for $s \in \mathcal{S}$ such that
\begin{align*}
    \Phi_{s}\left(\pi_{i}, \pi_{-i}\right)-\Phi_{s}\left(\pi_{i}^{\prime}, \pi_{-i}\right)=V_{s}^{i}\left(\pi_{i}, \pi_{-i}\right)-V_{s}^{i}\left(\pi_{i}^{\prime}, \pi_{-i}\right)
\end{align*}
% $$
% \Phi_{s}\left(\pi_{i}, \pi_{-i}\right)-\Phi_{s}\left(\pi_{i}^{\prime}, \pi_{-i}\right)=V_{s}^{i}\left(\pi_{i}, \pi_{-i}\right)-V_{s}^{i}\left(\pi_{i}^{\prime}, \pi_{-i}\right)
% $$
holds for all agents $i \in \mathcal{N}$, all states $s \in \mathcal{S}$ and all policies $\pi_{i}, \pi_{i}^{\prime} \in \Pi_{i}, \pi_{-i} \in \Pi_{-i}$. By linearity of expectation, it follows that $\Phi_{\rho}\left(\pi_{i}, \pi_{-i}\right)-\Phi_{\rho}\left(\pi_{i}^{\prime}, \pi_{-i}\right)=V_{\rho}^{i}\left(\pi_{i}, \pi_{-i}\right)-V_{\rho}^{i}\left(\pi_{i}^{\prime}, \pi_{-i}\right)$, where $\Phi_{\rho}(\pi):=\mathbb{E}_{s \sim \rho}\left[\Phi_{s}(\pi)\right] .$
\end{definition}

The solution concept of the PG and the MPG
relies on NE \citep{nash1950equilibrium}, the existence of which guarantees that all the agents can act 
in the best response to others. 
Formally, the 
best response
of agent $i$ with respect to the opponent’s policy $\pi^{-i}_{\theta^{-i}}$ indicates the policy $\pi^i_{\ast}$ such that $V^{i}\left(s ; \pi_{*}^{i}, \pi_{\theta^{-i}}^{-i}\right) \geq V^{i}\left(s ; \pi_{\theta^{i}}^{i}, \pi_{\theta-i}^{-i}\right)$ for all feasible $\pi^i_{\theta^i}$.
While it is still not clear how privacy will influence the equilibrium in the MARL setting, \citet{kumari2016cooperative} study the cooperative game with privacy and \citet{DBLP:journals/popets/PejoTB19}  model the game with privacy to solve the private learning problem.

Now we introduce the following assumptions and Theorem \ref{theorem: mpg NE} \citep{DBLP:conf/iclr/MacuaZZ18} to support the main theorems.
\begin{assumption}
\label{asmp: 1}
The state and parameter sets, $\mathbb{Z}$ and $\mathbb{W}$, are nonempty and convex.
\end{assumption}

We slightly abuse the notation in Assumption \ref{asmp: 2} and use $\sigma_{k,i}$ as a given random variable with distribution $p_{\sigma_k}(\cdot | x_i,a_i)$ instead of the standard deviation of the noise.

\begin{assumption}
\label{asmp: 2}
The reward functions $r_{k}(x_i,a_i,\sigma_{k,i})$ are twice continuously differentiable in $\mathbb{Z} \times \mathbb{W}, \forall k \in \mathcal{N}$.
\end{assumption} 

\begin{assumption}
\label{asmp: 3}
The state-transition function, $f$, and constraints, $g$, are continuously differentiable in $\mathbb{Z} \times \mathbb{W}$, and satisfy some regularity conditions (e.g., Mangasarian-Fromovitz).
\end{assumption}

\begin{assumption}
\label{asmp: 4}
The reward functions $r_{k}$ are proper, and there exists a scalar $B$ such that the level sets $\left\{a_{0} \in \mathbb{C}_{0},\left(x_{i}, a_{i}\right) \in \mathbb{C}_{i}: \mathbb{E}\left[r_{k}\left(x_{i}, a_{i}, \sigma_{k, i}\right)\right] \geq B\right\}_{i=0}^{\infty}$ are nonempty and bounded $\forall k \in \mathcal{N}$.
\end{assumption}

\begin{theorem}[\citep{DBLP:conf/iclr/MacuaZZ18}]
\label{theorem: mpg NE}
Let Assumptions \ref{asmp: 1}, \ref{asmp: 2}, \ref{asmp: 3}, \ref{asmp: 4} hold. Then, the game in Equation (8) of \cite{DBLP:conf/iclr/MacuaZZ18} is an MPG if and only if: i) the reward function of every agent can be expressed as the sum of a term common to all agents plus another term that depends neither on its own state-component vector nor on its policy parameter: 
\begin{align}
\label{eq: ref 1}
\begin{split}
&r_{k}\left(x_{k, i}^{r}, \pi_{k}\left(x_{k, i}^{\pi}, w_{k}\right), \pi_{-k}\left(x_{-k, i}^{\pi}, w_{-k}\right), \sigma_{k, i}\right)\\
=&J\left(x_{i}, \pi\left(x_{i}, w\right), \sigma_{i}\right)+\Theta_{k}\left(x_{-k, i}^{r}, \pi_{-k}\left(x_{-k, i}^{\pi}, w_{-k}\right), \sigma_{i}\right)\,,
\end{split}
\end{align}
$\forall k \in \mathcal{N}$; and ii) the following condition on the non-common term holds:
\begin{align}
\label{eq: ref 2}
\mathbb{E}\left[\nabla_{x_{k, i}^{\Theta}} \Theta_{k}\left(x_{-k, i}^{r}, \pi_{-k}\left(x_{-k, i}^{\pi}, w_{-k}\right), \boldsymbol{\sigma}_{i}\right)\right]=0\,,
\end{align}
where $x_{k, i}^{\Theta} \triangleq\left(x_i(m)\right)_{m \in \mathcal{X}_k^{\Theta}}$, $\mathcal{X}_k^{\Theta}=\mathcal{X}_k^\pi \cup \mathcal{X}_k^r$, $\mathcal{X}_k^\pi$ is the set of state vector components
that influence the policy of agent $k$ and $\mathcal{X}_k^r$ is the set of components of the state vector that influence the reward of agent $k$ directly.
Moreover, if Equation (\ref{eq: ref 2}) holds, then the common term in Equation (\ref{eq: ref 1}), $J$, equals the potential function.

\end{theorem}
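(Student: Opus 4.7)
The plan is to establish the if-and-only-if characterization for the Markov game of Equation~(8) in \cite{DBLP:conf/iclr/MacuaZZ18} in two directions, reducing the MPG property (Definition~\ref{def: markov potential game}) to a gradient identity on value functions and then extracting the claimed reward decomposition from that identity. Throughout, Assumptions~\ref{asmp: 1}--\ref{asmp: 4} are used to ensure value functions are well defined, parameter sets are convex, and differentiation commutes with expectation.

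For the sufficiency (``if'') direction, I would define the candidate potential $\Phi(w):=\mathbb{E}\bigl[\sum_i J(x_i,\pi(x_i,w),\sigma_i)\bigr]$ under the joint parameter $w=(w_k,w_{-k})$ and compare $\nabla_{w_k} V^k(w)$ with $\nabla_{w_k}\Phi(w)$ via the policy gradient theorem (or the Lagrangian/KKT framework of \cite{DBLP:conf/iclr/MacuaZZ18} enabled by Assumptions~\ref{asmp: 1}--\ref{asmp: 3}). Since $r_k=J+\Theta_k$, the $J$-contributions to the two gradients coincide by construction. The $\Theta_k$-contribution can only flow through the induced state distribution on $\mathcal{X}_k^\Theta$, because $\Theta_k$ is not an explicit function of $x^\pi_{k,i}$, $x^r_{k,i}$, or $w_k$; Equation~(\ref{eq: ref 2}) is precisely the statement that this state-mediated derivative vanishes in expectation. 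Hence $\nabla_{w_k}(V^k-\Phi)\equiv 0$, which integrates on the convex parameter set (Assumption~\ref{asmp: 1}) to the MPG identity $V^k(w_k,w_{-k})-V^k(w'_k,w_{-k})=\Phi(w_k,w_{-k})-\Phi(w'_k,w_{-k})$. Identifying $\Phi$ with the potential function then verifies the final clause of the theorem.

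For the necessity (``only if'') direction, I would start from the existence of some potential $\tilde\Phi$ realizing the MPG identity, differentiate to obtain $\nabla_{w_k}(V^k-\tilde\Phi)=0$, and use the freedom to vary the initial state distribution $\rho$ (with Assumption~\ref{asmp: 4} guaranteeing boundedness and properness of the value functions) to pull this integral equation down to the integrand level. This yields a pointwise decomposition $r_k=J+\Theta_k$ in which the common term $J$ must coincide with the integrand defining $\tilde\Phi$, while the residual $\Theta_k$ cannot depend on $(x^\pi_{k,i},x^r_{k,i},w_k)$ directly except through state-mediated channels for which the orthogonality relation~(\ref{eq: ref 2}) must then hold.

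The main obstacle is separating direct from state-mediated dependence of $\Theta_k$ on agent~$k$'s own inputs, because the Markov state dynamics couple every agent's past actions to future states. The resolution is an envelope-style argument: under the smoothness and constraint regularity of Assumptions~\ref{asmp: 2}--\ref{asmp: 3}, differentiation commutes with expectation, and any genuine ``own-channel'' dependence of $\Theta_k$ on $w_k$ or on $(x^\pi_{k,i},x^r_{k,i})$ would produce a first-order term along $w_k$-perturbations that cannot be absorbed into $\Phi$, contradicting the MPG condition. This argument pins down exactly the decomposition~(\ref{eq: ref 1}) together with the vanishing condition~(\ref{eq: ref 2}), completing both directions and identifying the common term $J$ as the potential function.
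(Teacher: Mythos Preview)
The paper does not prove this theorem at all: it is stated verbatim as a cited result from \cite{DBLP:conf/iclr/MacuaZZ18} and then immediately applied to the authors' multiple-round-sums construction. There is therefore no ``paper's own proof'' to compare your proposal against; the authors treat the statement as a black-box tool.

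Your sketch is a plausible outline of how such a characterization could be argued, and the sufficiency direction is close to what appears in the original source. The necessity direction, however, is where your proposal is thinnest: saying you will ``pull this integral equation down to the integrand level'' by varying the initial distribution $\rho$ glosses over the real difficulty, which is that the value function aggregates rewards across \emph{all} time steps under a state process that itself depends on $w$. Varying $\rho$ only gives you freedom at time zero, not a per-stage identity; the original proof in \cite{DBLP:conf/iclr/MacuaZZ18} instead works through the optimal-control reformulation and compares KKT conditions of the individual problems with those of a common surrogate, which is what actually forces the decomposition (\ref{eq: ref 1}) and the orthogonality condition (\ref{eq: ref 2}) stage by stage. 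Your envelope-style argument would need to be made substantially more precise to recover that conclusion.
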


To make multiple round sums an MPG, the following detailed settings including the state space, the state transition, the action space, and the reward function are given. By such a specific design, we could achieve Theorem \ref{theorem: NE Guarantee in Multiple Round Sums in appendix}, which shows the existence of a NE.

    % \textbf{State.} 
    \paragraph{State}
    The state $x_{i,t} \in \mathbb{R}$ represents the remaining saving of agent $i$ at time $t$ (initially $x_{i,0}$).
    
    \paragraph{State transition}
    % \textbf{State transition.} 
    The transition function is deterministic, which is $x_{i,t+1} = x_{i,t} - b_{i,t}$.

    %  \textbf{Action.} 
     \paragraph{Action}
     For $i\in [N]$, at time $t$, agent $i$ with policy $\pi_i$ assigns $b_{i,t}$ savings according to the current state and the message sent by other agents. Then agent $i$ performs the randomized perturbation on the assignment with privacy level $p_{i,t}$, with the message sender protocol $f_i^s$. After that the message $m_{i,t}$ is generated and sent to other agents. Formally,
     \begin{align}
     \begin{split}
    b_{i,t},p_{i,t}&=\pi_i^b(x_{i,t}, m_{-i, t-1}),\pi_i^p(x_{i,t}, m_{-i, t-1})\,, \notag \\
    m_{i,t} &= {f}_i^s(x_{i,t}, b_{i,t},p_{i,t})\,.\notag 
    \end{split}
    \end{align}
    
    %  \textbf{Reward function.} 
     \paragraph{Reward function}
     The reward function for agent $i$ is designed with the trade-off between privacy and utility, as shown in Equation (\ref{eq: multi-step game reward}). Privacy reward is performed to praise the privacy preserving. Formally, $\forall i\in N$,
%     \begin{align}
%     \label{eq: multi-step game reward}
% \begin{split}
%      r_i(x_{i,t}, \pi_i(x_{i,t}), \pi_{-i}(x_{-i,t}))
%     = \sum_{j\in N} (1-p_{j,t})b_{j,t} + \alpha x_{i,t} + \beta p_{i,t}\,.
%     \end{split}
%     \end{align}
    \begin{align}
    \label{eq: multi-step game reward}
     &r_i(x_{i,t}, \pi_i(x_{i,t}), \pi_{-i}(x_{-i,t}))\notag\\
    =& \sum_{j\in N} (1-p_{j,t})b_{j,t} + \alpha x_{i,t} + \beta p_{i,t}\,.
    \end{align}

The reward function given in Equation (\ref{eq: multi-step game reward}) could be written in two terms respectively, where $ \sum_{j\in N} (1-p_{j,t})b_{j,t}$ represents the first term in Equation (\ref{eq: ref 1}) and $\alpha x_{i,t} + \beta p_{i,t}$ represents the second term. Theorem \ref{theorem: NE Guarantee in Multiple Round Sums in appendix} is thus given.
\begin{theorem}[NE guarantee in multiple round sums]
\label{theorem: NE Guarantee in Multiple Round Sums in appendix}
% For the multiple round sums game, the reward function in Equation (\ref{eq: multi-step game reward}) will satisfy Theorem \ref{theorem: mpg NE} if we make the gradient chain between the message $m_{-i}$ and the message function $f_{i}$ cut down after sending. 
For the multiple round sums game, the reward function in Equation (\ref{eq: multi-step game reward}) will satisfy Theorem \ref{theorem: mpg NE} if the gradient between the message $m_{-i}$ and the message function $f_{i}$ is $0$. 
Further, if Assumptions \ref{asmp: 1}, \ref{asmp: 2}, \ref{asmp: 3}, \ref{asmp: 4} (see Appendix \ref{appendix: multi-step game}) are satisfied, our MPG could find a NE with potential function $J$ as shown below,
    \begin{align}
    J(x_{t}, \pi(x_t))=\sum_{j\in [N]} \left( (1-p_{j,t})b_{j,t}+\alpha x_{j,t}+\beta p_{i,t}  \right)\,.
    \end{align}

\end{theorem}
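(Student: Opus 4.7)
The plan is to apply Theorem \ref{theorem: mpg NE} by exhibiting a decomposition of each agent's reward into a common potential term plus an agent-specific remainder, verifying the expected-gradient condition, and then invoking the standard NE existence result for MPGs.

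First I would decompose the reward. Setting $J(x_{t},\pi(x_t)) = \sum_{j\in[N]}\bigl((1-p_{j,t})b_{j,t} + \alpha x_{j,t} + \beta p_{j,t}\bigr)$ (reading $p_{j,t}$ inside the sum in place of the apparent typographical $p_{i,t}$) and $\Theta_i = r_i - J$, a direct calculation yields $\Theta_i = -\sum_{j\neq i}\bigl(\alpha x_{j,t} + \beta p_{j,t}\bigr)$. This $\Theta_i$ depends only on $x_{-i,t}$ and on the other agents' policies $\pi_{-i}$ through $p_{j,t} = \pi_j^p(x_{j,t}, m_{-j,t-1})$, so the pair $(J,\Theta_i)$ has exactly the structure required by Equation (\ref{eq: ref 1}): $J$ is common across all players, and $\Theta_i$ is independent of agent $i$'s own state component and policy parameters except possibly via the outgoing messages $m_{i,\cdot}$ that agent $i$ broadcasts.

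Second, I would verify Equation (\ref{eq: ref 2}). The set $\mathcal{X}_i^\Theta = \mathcal{X}_i^\pi\cup\mathcal{X}_i^r$ contains only the state components entering agent $i$'s own policy or reward. Since $\Theta_i$ depends on those components solely through $p_{j,t}$ with $j\neq i$, and since $p_{j,t}$ can depend on agent $i$ only via the outgoing message $m_{i,t-1} = f_i^s(x_{i,t-1},b_{i,t-1},p_{i,t-1})$, the chain rule routes $\nabla_{x_i^\Theta}\Theta_i$ entirely through the Jacobian $\partial m_{-i}/\partial f_i^s$. The stated hypothesis that this gradient vanishes therefore forces $\mathbb{E}[\nabla_{x_i^\Theta}\Theta_i]=0$, which is exactly Equation (\ref{eq: ref 2}).

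With both structural conditions of Theorem \ref{theorem: mpg NE} in hand and Assumptions \ref{asmp: 1}--\ref{asmp: 4} in force, the multiple round sums game is an MPG whose potential is the stated $J$. Existence of a NE then follows by the classical MPG argument: the boundedness of the level sets (Assumption \ref{asmp: 4}) combined with the continuity granted by Assumptions \ref{asmp: 2}--\ref{asmp: 3} yields a joint maximizer of $J$ over the admissible policy space, and by Definition \ref{def: markov potential game} every unilateral deviation shifts the deviator's value function by exactly the same amount as it shifts $J$, so no profitable unilateral deviation can exist at such a maximizer.

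The main obstacle I expect is making the gradient-routing argument of the second paragraph fully rigorous. The coupling among the messages, the privacy levels $p_{j,t}$, and the downstream $\Theta_i$ must be traced carefully through the chain rule so that one genuinely extracts the factor $\partial m_{-i}/\partial f_i^s$ the hypothesis kills; the algebraic decomposition and the MPG-to-NE step are essentially bookkeeping once that computation is secured.
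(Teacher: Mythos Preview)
Your proposal is correct and follows the same overall route as the paper: verify the two structural conditions of Theorem \ref{theorem: mpg NE} so that the multiple round sums game is an MPG with the stated potential $J$, and then read off the existence of a Nash equilibrium. The only real difference is in the decomposition. The paper's one-line argument identifies $\sum_{j}(1-p_{j,t})b_{j,t}$ as the common term and $\alpha x_{i,t}+\beta p_{i,t}$ as the non-common term $\Theta_i$; taken literally this $\Theta_i$ depends on agent $i$'s own state and action, so it does not satisfy condition~(i) of Theorem \ref{theorem: mpg NE} as written. Your choice $J=\sum_{j}\bigl((1-p_{j,t})b_{j,t}+\alpha x_{j,t}+\beta p_{j,t}\bigr)$ and $\Theta_i=-\sum_{j\neq i}(\alpha x_{j,t}+\beta p_{j,t})$ is the decomposition that actually matches the potential displayed in the theorem statement and for which $\Theta_i$ manifestly omits agent $i$'s own state and policy. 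Your treatment of condition~(ii) --- tracing the only residual dependence of $\Theta_i$ on agent $i$ through the outgoing message and then invoking the stated zero-gradient hypothesis --- is also more explicit than anything the paper provides. In short, both arguments appeal to the same cited MPG theorem, but your execution of the decomposition is the one that is internally consistent with the claimed $J$.
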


% ---------- implementation details ---------------

\section{Implementation Details}\label{sec:app:implementation}
% \textbf{Agent.} 
\paragraph{Agent}
We build our communication method upon MADDPG \citep{MADDPG_2017}.
However, we note that our communication method is general enough to be built on the top of any MARL algorithm with the CTDE paradigm.
Besides, we use recurrent neural networks (RNN) for the actors of agents to approximate policies, which is shown to be  effective in partially observable environments \citep{hausknecht2015deep}.

% \textbf{Communication protocol.} 
\paragraph{Communication protocol}
The sender utilizes the multi-layer perceptron (MLP) and the receiver is an attention-based  network \citep{attention_2017}. For the sender, a shared linear layer is first applied and two linear layer follows to output the mean and the logarithm of the standard deviation. For the receiver, three linear layers corresponding to $K,Q,V$ is directly applied with no shared layer. 
% It could be seen that the network structure of our communication protocol is simple.
% One can see that the network structure of our communication protocol is simple.

% \paragraph{Whether to encode the agent index}
% At each time $t$, the messages sent from agent $i$ to different target agents could be sampled from different message distributions by  including the index of target agent (say agent $j$) into the input of $f^s_i$ as $p_{i,j}=f_{i}^{s}\left(o_{i}, a_{i}, j;\theta_{i}^{s}\right)$, 
% or just from the same message distribution by making $p_{i,j}=f_{i}^{s}\left(o_{i}, a_{i};\theta_{i}^{s}\right)$.  
% In our implementation, we do not include the target agent index into the 
% input of $f^s_i$ due to that including the target agent index into the 
% input of $f^s_i$ may lead to performance degeneration in some scenarios.

% \textbf{Reparameterization trick.} 
\paragraph{Reparameterization trick}
The stochastic message sender adopts the multivariate Gaussian distribution, which is implemented via the reparameterization trick \citep{KingmaW13}.
% This trick is useful for optimizing a probability distribution output by a neural network. 
Specifically, in our implementation, the stochastic message $p_i$ is
\begin{align*}
    p_i=\mu_i+\tilde{\sigma}_i\odot\xi\,,
\end{align*}
where $\mu_i$ and $\tilde{\sigma}_i$ are the mean and variance as the outputs of two neural networks of the sender, $\xi\sim\mathcal{N}(\bm{0},\mathbf{I}_d)$, and $\odot$ denotes the element-wise product.
% \begin{align}
%     m_i = u_i + \xi \cdot \sigma_i\,,
% \end{align}
% where $\xi$ is drawn from the standard Gaussian distribution $\mathcal{N}(0,1)$ and $\mu_i$ and $\sigma_i$ are the mean and the variance output by the neural network.

% \ze{we may add some implementation details, or if no time, just skip}
% \textbf{TarMAC \citep{TarMAC_2019}} We implement the communication method of TarMAC as described in \citep{TarMAC_2019}.

% \textbf{I2C \citep{I2C_2020}}

\section{Training Details}\label{sec:app:training}
\paragraph{Optimization process of message senders and receivers}
{\revise The message receiver of agent $i$ encodes all the received messages into the encoded messages, which together with the observation of agent $i$ will be fed into the actor of agent $i$. Hence the message receiver of agent $i$ serves as a component in the actor of agent $i$, which will be updated by backpropagating the actor loss using policy gradient. By the chain rule, since the messages received by the receiver of agent $i$ are sent from the message senders of all other agents, the message senders of all other agents will then be updated by backpropagation from the message receiver of agent $i$.}

% We use 4 NVIDIA GeForce RTX 3090 GPUs with the 20 GB memory.
\paragraph{Parameter setting}
% All the experiments are conducted on a server with $4$ NVIDIA GeForce RTX 3090 GPUs.
The hyperparameters of all algorithms are shown in Table \ref{table: hyper param}. 
% We use the same learning rate as in \citep{intention_sharing_2021}. 
Since all the algorithms are built on the top of MADDPG, we tune the hyperparameters such that MADDPG has the best performance, which is also adopted in \cite{intention_sharing_2021}.
% We use the same learning rate as in \citep{intention_sharing_2021}. 
The hidden dimension for actor, critic, and the message protocol is selected by grid searching in $\{32, 64,128,256\}$, and the message dimension is selected by conducting grid search among values $\{4,8,16,32\}$.
\begin{table*}[h]
\centering
\caption{Hyperparameters of all algorithms.}
\label{table: hyper param}
\begin{tabular}{lcccc}
\hline & MADDPG & TarMAC & I2C & DPMAC \\
\hline
Discount Factor & $0.99$ & $0.99$ & $0.99$ & $0.99$ \\
Batch Size (CCN\&CC) & $128$ & $128$ & $128$ & $128$ \\
Batch Size (PP) & $256$ & $256$ & $256$ & $256$ \\
Buffer Size  & $1\times10^4$ & $1\times10^4$ & $1\times10^4$ & $1\times10^4$ \\
Optimizer & Adam & Adam & Adam & Adam \\
Activation Function & ReLU & ReLU & ReLU & ReLU \\
Learning Rate & $7\times10^{-4}$ & $7\times10^{-4}$  & $7\times10^{-4}$  & $7\times10^{-4}$ \\
Hidden Dimension for Actor/Critic & $128$ & $128$ & $128$ & $128$\\
Hidden Dimension for Message Protocol & $-$ & $32$ & $32$ & $32$\\
Message Dimension & $-$ & $8$ & $8$ & $8$ \\
\hline
\end{tabular}
\end{table*}

\section{Environment Details}
\label{appendix: environment details}
% \textbf{Cooperative navigation (CN).} 
\paragraph{Cooperative navigation (CN)}
Cooperative navigation is a standard task for multi-agent systems, introduced in \cite{MADDPG_2017}, where agents target at reaching their own landmarks while avoiding collision. There are in total $N=3$ agents for our experiment setting. 

% \textbf{Cooperative communication and navigation (CCN).} 
\paragraph{Cooperative communication and navigation (CCN)}
The cooperative communication task is introduced in \cite{mordatch2017emergence}, where agents' goal is to reach their target landmark while each agent only knows the location of the other agent's target. The agents do not communicate with the channels embedded in the task and only share personal information with a learned protocol.

% \textbf{Predator prey (PP).} 
\paragraph{Predator prey (PP)}
The predator prey task is a standard task and we use the same setting and the same evaluation way as in \cite{I2C_2020}.  Specifically, there are $N=3$ predators and $M=2$ preys in this task, whose initial positions are randomized initialized. Each predator is controlled by a agent, 
and each prey moves in the closest predator's opposite direction.
Since the speed of preys is higher than that of predators, cooperation  is required for agents to capture a prey. The team reward is the negative sum of physical distances from all the predators to their closet preys.
% To strengthen the importance of communication, we further constrict the vision of each predator such that each predator can only observe the relative positions of the predators and the preys within its vision, which is set as $r=1.2$. 
Besides, the predators will be penalized for each time any two predators collide with each other and we set the collision penalty as $r_{\operatorname{collision}}=-1$. Each episode has $40$ timesteps in this task.

% \section{Supplementary materials for Experiments}
\section{Additional Experiment Results}
\label{sec: supplementary materials for Experiments}
In this section, we present some additional experiment results in Figure \ref{fig: dp tarmac}, and Figure \ref{fig: dp i2c}.
\begin{figure*}[!thbp]
    \centering
    \includegraphics[width=1.0\textwidth]{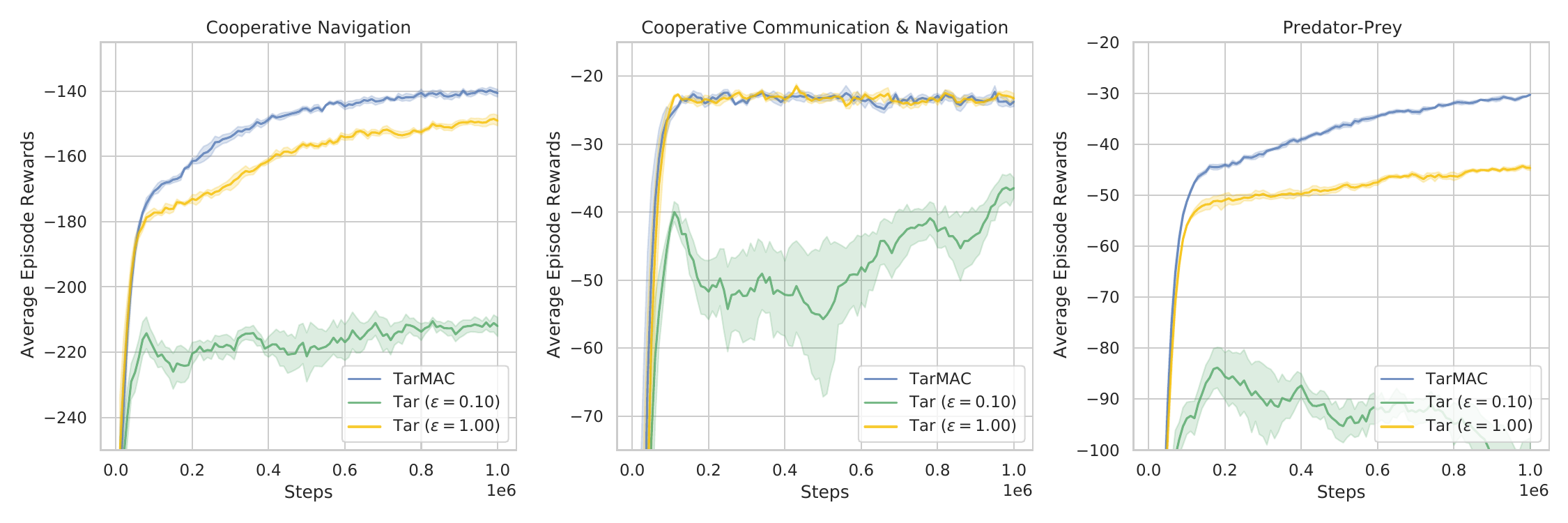}
    \caption{Performance of TarMAC with $\epsilon=0.1$ and $\epsilon=1.0$.}
    \label{fig: dp tarmac}
\end{figure*}

\begin{figure*}[!thbp]
    \centering
    \includegraphics[width=1.0\textwidth]{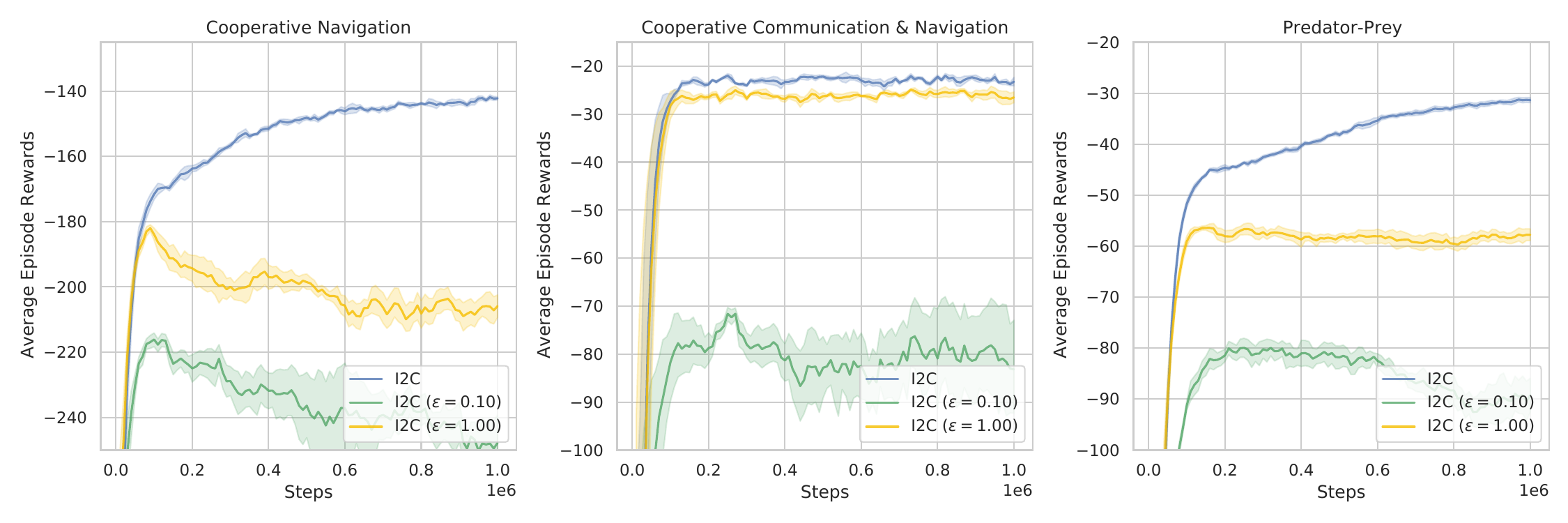}
    \caption{Performance of I2C with $\epsilon=0.1$ and $\epsilon=1.0$.}
    \label{fig: dp i2c}
\end{figure*}

{\revise 
\section{Additional Experiment Analyses}\label{sec:app_addition_exp_analyses}
In this section, we present the additional experiment analyses regarding Figure \ref{fig:fig_single_step_exp}.
\subsection{Experiment Results without Privacy}\label{sec:further_analyses_wo_privacy}
We first analyze the baseline methods, TarMAC and I2C, which are effective communication-based algorithm and can achieve superior performance than methods without communication such as MADDPG. As shown in Figure \ref{fig: communication exp}, TarMAC and I2C outperform MADDPG on cooperative navigation (CN) and predator-prey (PP) tasks and achieve similar performance as MADDPG on cooperative communication and navigation (CCN) task. However, DPMAC without privacy constraints outperforms TarMAC and I2C by a large margin on CCN task and is comparable with TarMAC and I2C on CN and PP tasks. 
% Under the privacy constraints, DPMAC with $\epsilon=0.10$ outperforms TarMAC and I2C on PP task.
These results demonstrate the effectiveness of the message sender and receiver of DPMAC, which also has competitive performance under no privacy constraints.

\subsection{Experiment Results with Privacy}\label{sec:further_analyses_w_privacy}
We further investigate how the DP noise affects our DPMAC and other baseline methods. From Figure \ref{fig: privact exp 0.1}, one can observe a serious performance degradation of TarMAC and I2C with a privacy budget of $\epsilon=0.10$, which makes these two methods even worse than non-communicative MADDPG. In contrast, our DPMAC could still significantly outperform MADDPG on CN and PP tasks and is comparable with MADDPG on CCN task. From Figure \ref{fig: privacy exp 1.0}, which is with a privacy budget of $\epsilon=1.00$ (note that $\epsilon=1.00$ enjoys weaker privacy guarantee than $\epsilon=0.10$ due to fewer privacy noises injected), our DPMAC still clearly outperforms all other methods on CN and PP tasks and is comparable with MADDPG on CCN task. In sharp contrast to DPMAC, TarMAC still has clear degeneration on all tasks and is even worse than MADDPG on PP task, and I2C fails across all three tasks under such privacy constraints.
These results demonstrate that the design principle of DPMAC that adjusting the learned message distribution via
incorporating the distribution of privacy noise into the optimization process of the message sender and receiver can well alleviate the negative impacts of the privacy noises.

\subsection{DPMAC under Different Privacy Budgets}\label{sec:further_analyses_on_different_privacy}
In Figure \ref{fig: privacy exp dpmac}, we further show the performance of DPMAC under different privacy budget $\epsilon$. On CN task, the performance of DPMAC with different privacy budgets are very close except $\epsilon=0.01$, which is a very stringent privacy level. 
On CCN task, DPMAC without privacy constraint achieves the best performance, and DPMAC with different $\epsilon$ could also reach meaningful performance. 
% On PP environment, DPMAC with $\epsilon=0.10$ even outperforms non-private DPMAC, and  one can also see that the performance of DPMAC drops clearly only when $\epsilon=0.01$. 
On PP task, DPMAC with $\epsilon=0.10$ and $\epsilon=1.00$ even outperform non-private DPMAC. 
As we have analyzed in Section \ref{sec:method_sender}, this is because DPMAC can
learn the parameter ${\theta}_i^s$ of the stochastic message sender such that ${D}_{\operatorname{KL}}(\mathcal{N}(\mu_i, \Sigma_i+\sigma^2_i\mathbf{I}_d) \| \mathcal{N}(\mu^\ast_i,\Sigma^\ast_i))\leq {D}_{\operatorname{KL}}(\mathcal{N}(\mu^\prime_i, \Sigma^\prime_i) \| \mathcal{N}(\mu^\ast_i,\Sigma^\ast_i))$, which means that
the private messages can even encourage better team cooperation to gain higher team rewards than the non-private messages.
% validates our analysis  
Overall, one can see that the performance of DPMAC drops clearly only when $\epsilon=0.01$ on PP task. 
The above results also clearly demonstrate that DPMAC can learn to adjust the message distribution to alleviate the potential negative impacts of privacy noises, which ensures meaningful performance even under limited privacy budgets and stabilizes performance.

}

{\revise \section{Experiments with Episode-level $(\epsilon_i,\delta)$-DP}
In this section, we present the experiment results  
% of DPMAC as well as baseline methods TarMAC and I2C 
under episode-level $(\epsilon_i,\delta)$-DP constraints 
% shown in Figure \ref{fig: episodic DP of DPMAC} 
with the privacy noise specified in Theorem \ref{theorem:multi_step_privacy}  and the corresponding experiment analyses.

\subsection{Experiment Results with Episode-level $(\epsilon_i,\delta)$-DP}
Comparing Figure \ref{fig: privact exp 0.1} and \ref{fig: episodic privact exp 0.1}, 
% one can see that 
under episode-level privacy constraint of $\epsilon=0.10$, both the performance of TarMAC and I2C has a further degeneration across all the tasks. However, DPMAC could still clearly outperform MADDPG on CN task and is comparable with MADDPG on CCN task.
On PP task, DPMAC is outperformed by MADDPG but still exceeds TarMAC and I2C by a large margin. 
Comparing Figure \ref{fig: privacy exp 1.0} and \ref{fig: episodic privacy exp 1.0}, I2C still fails across all tasks and TarMAC also has a clear performance drop. Specifically, 
% one can see that 
TarMAC has exceeded MADDPG on CN task and was comparable with MADDPG on CCN task in Figure \ref{fig: privacy exp 1.0}, but now TarMAC turns out be outperformed by MADDPG on these tasks in Figure \ref{fig: episodic privacy exp 1.0}. Besides, the performance gap between MADDPG and TarMAC now also becomes larger under the episode-level privacy constraint on the PP task. 
However, even though under episode-level privacy constraint of $\epsilon=1.00$, DPMAC still outperforms MADDPG on CN and PP tasks and is comparable with MADDPG on the CCN task. 
The above results indeed demonstrate that it is harder to learn under the episode-level privacy constraints since larger privacy noises are injected into the messages, but also validate the effectiveness  of DPMAC, which still has good performance under episode-level privacy constraints.

\subsection{DPMAC under Different Episode-level Privacy Budgets}
We now analyze the performance of DPMAC under different episode-level privacy constraints. 
Comparing Figure \ref{fig: privacy exp dpmac} and \ref{fig: episodic privacy exp dpmac}, 
% the performance of 
% DPMAC nearly remains the same as before and under the episode-level privacy constraint of $\epsilon=1.00$. 
the performance of 
DPMAC  under the episode-level privacy constraint of $\epsilon=1.00$ nearly remains the same as before. 
Under episode-level privacy constraint of $\epsilon=0.10$,
DPMAC still has a similar performance on CN and CCN tasks though converges slightly slower. But on the PP task, DPMAC also has a slight performance drop. Under 
episode-level privacy constraint of $\epsilon=0.01$, though DPMAC still has a similar performance on CN and CCN tasks, it nearly fails to learn on the PP task.
% We leave the investigation of whether it is possible to improve the performance of DPMAC under episode-level privacy constraint of $\epsilon=0.01$ as our future work.
We leave the investigation of improving the performance of DPMAC under episode-level privacy constraint of $\epsilon=0.01$ as our future work.
% However, we note that the  privacy level of $\epsilon=0.01$ is a rather stringent privacy constraint in which the privacy noise with large variance is injected. 
However, we note again that the  privacy level of $\epsilon=0.01$ is a rather stringent privacy constraint requiring injecting privacy noise with a particularly large variance.
% in which the privacy noise with large variance is injected. 
Overall, the above results show that DPMAC  can still learn to adjust the message distributions to alleviate the negative impacts of privacy noises even under episode-level privacy constraints. 

\begin{figure*}[t]
    
    \centering%
    \begin{subfigure}{1.0\textwidth}
    \includegraphics[width=1\textwidth]{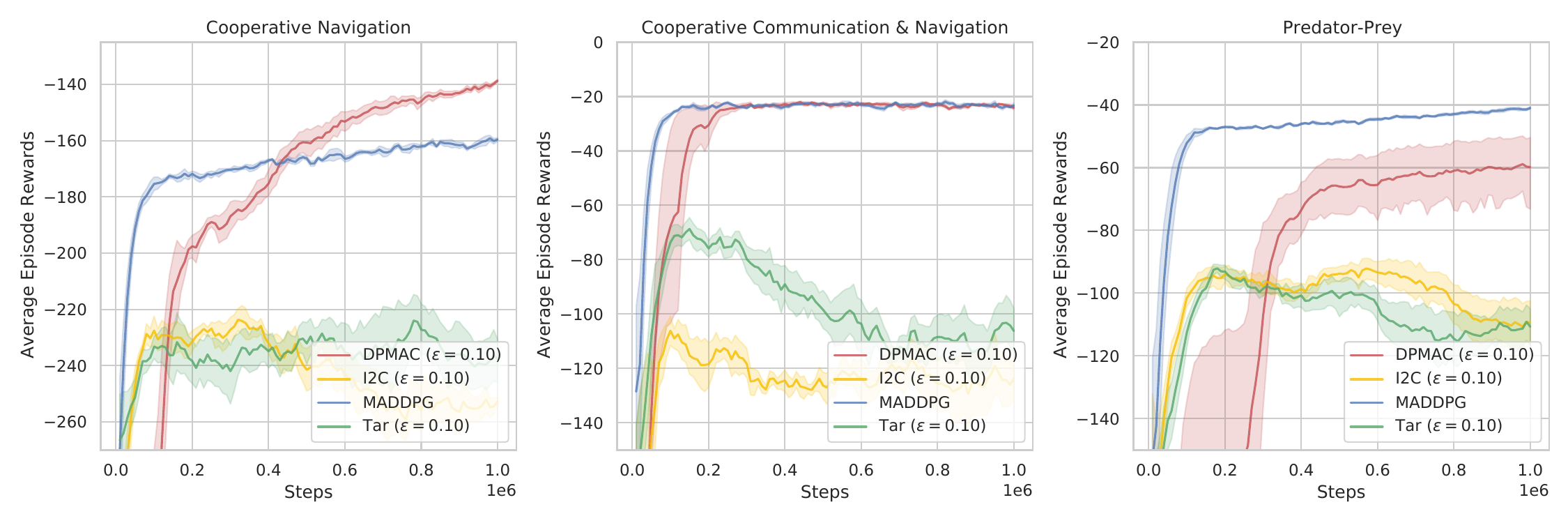}
    \caption{{\revise Performance of different algorithms under the episode-level privacy budget $\epsilon=0.10$. MADDPG (non-private) is also displayed for comparison.}}
    \label{fig: episodic privact exp 0.1}
    \end{subfigure}
    
     \centering
    \begin{subfigure}{1.0\textwidth}
        
    \includegraphics[width=1\textwidth]{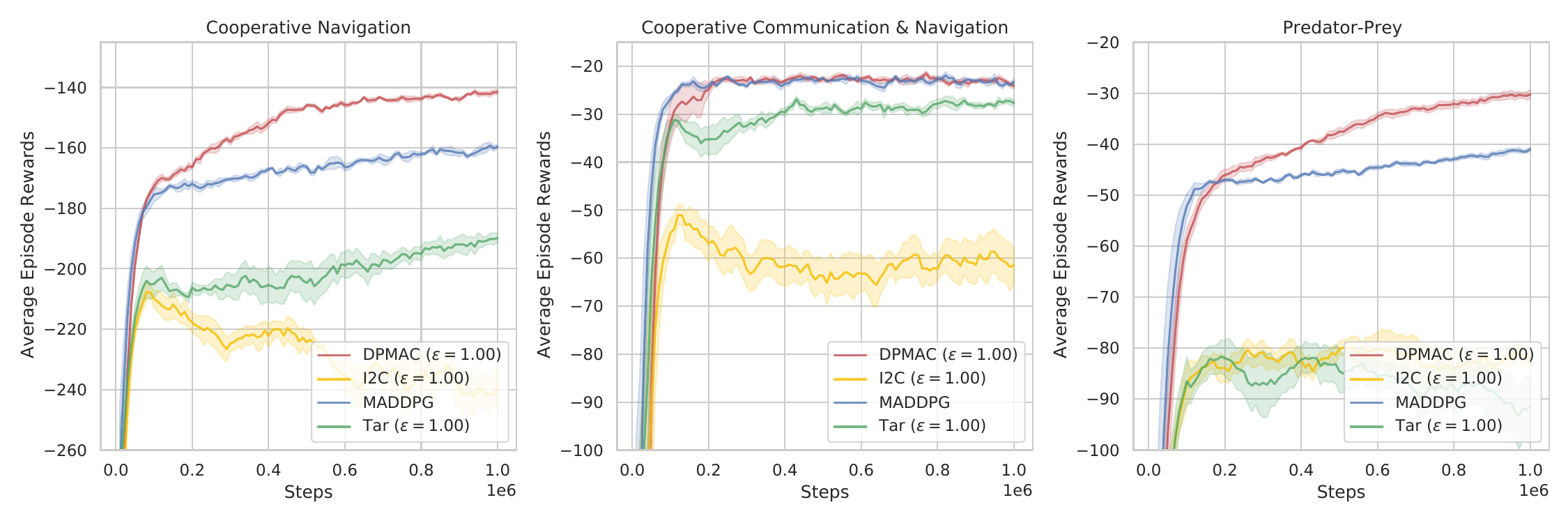}
    \caption{{\revise Performance of different algorithms under the episode-level privacy budget $\epsilon=1.0$. MADDPG (non-private) is also displayed for comparison.}}
    \label{fig: episodic privacy exp 1.0}
    \end{subfigure}
    
     \centering%
    \begin{subfigure}{1.0\textwidth}
        
    \includegraphics[width=1\textwidth]{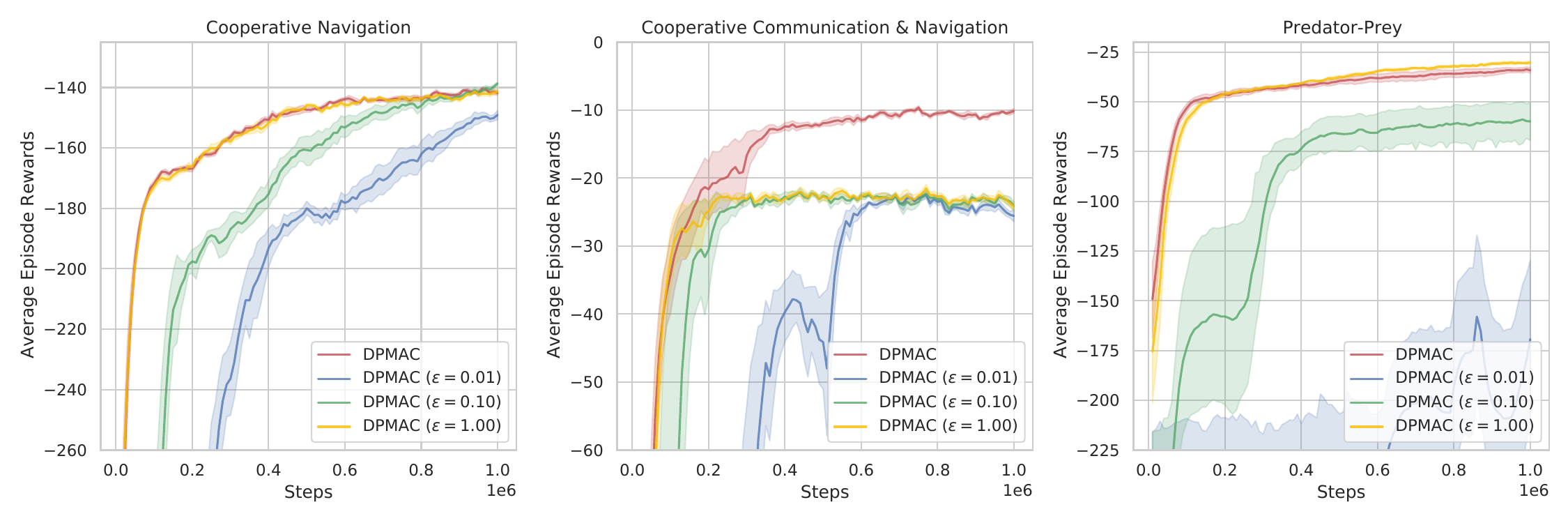}
    \caption{{\revise Performance of different episode-level privacy budgets ($\epsilon=0.01,0.10,1.00$) for DPMAC.}}
    \label{fig: episodic privacy exp dpmac}
    \end{subfigure}
    
    \caption{{\revise Performance of DPMAC and baseline algorithms. The curves are averaged over $5$ seeds. Shaded areas denote $1$ standard deviation.}}\label{fig: episodic DP of DPMAC}
\end{figure*}

}

\end{document}